\newtheorem{prop}{Proposition}
\newtheorem{proper}{Property}
\begin{document}

\def\o{\ensuremath\varnothing}
%
\title{Principal Component Analysis Based on \\T$\ell_1$-norm Maximization}
%
%
%

\author{Xiang-Fei~Yang,~\IEEEmembership{}
        Yuan-Hai~Shao,~\IEEEmembership{}
        Chun-Na~Li,~\IEEEmembership{}
        Li-Ming~Liu,~\IEEEmembership{}
        and~Nai-Yang~Deng ~\IEEEmembership{}

\thanks{This work was supported in part by the National Natural Science Foundation of China under Grant 11926349, Grant 61866010, Grant 11871183, and Grant 61703370, in part by the Scientic Research Foundation of Hainan University under Grant KYQD(SK)1804, and in part by Heavy Beijing City Research Center Project under Grant TDJD201502. \textit{(Corresponding authors: Yuan-hai Shao and Li-Ming Liu.)}}
\thanks{X.-F.~Yang is a Ph.D student at School of Statistics, Capital University of Economics and Business, Beijing 100070, China (e-mail: yxf9011@163.com).}
\thanks{Y.-H.~Shao is with the School of Management, Hainan University, Haikou 570228, China (e-mail: shaoyuanhai21@163.com).}
\thanks{C.-N.~Li is with the School of Management, Hainan University, Haikou 570228, China (e-mail: na1013na@163.com).}
\thanks{L.-M.~Liu is with the School of Statistics, Capital University of Economics and Business, Beijing 100070, China (e-mail: llm5609@163.com).}
\thanks{N.-Y.~Deng was with the College of Science, China Agricultural University, Beijing 100083, China (e-mail: dengnaiyang@cau.edu.cn).}
\thanks{This paper has supplementary material in appendix.}
\thanks{Manuscript received April 19, 2005; revised August 26, 2015.}}

%
%

\markboth{Journal of \LaTeX\ Class Files,~Vol.~14, No.~8, August~2015}%
{Shell \MakeLowercase{\textit{et al.}}: Bare Demo of IEEEtran.cls for IEEE Journals}
%



\maketitle

\begin{abstract}
Classical principal component analysis (PCA) may suffer from the sensitivity to outliers and noise. Therefore PCA based on $\ell_1$-norm and $\ell_p$-norm ($0 < p < 1$) have been studied. Among them, the ones based on $\ell_p$-norm seem to be most interesting from the robustness point of view. However, their numerical performance is not satisfactory. Note that, although T$\ell_1$-norm is similar to $\ell_p$-norm ($0 < p < 1$) in some sense, it has the stronger suppression effect to outliers and better continuity. So PCA based on T$\ell_1$-norm is proposed in this paper. Our numerical experiments have shown that its performance is superior than PCA-$\ell_p$ and $\ell_p$SPCA as well as PCA, PCA-$\ell_1$ obviously.
\end{abstract}

\begin{IEEEkeywords}
Principal component analysis (PCA), T$\ell_1$-norm, Robust modeling, Dimensionality reduction.
\end{IEEEkeywords}

%
\IEEEpeerreviewmaketitle

\section{Introduction}
%
%
%
%
\IEEEPARstart{P}{rincipal} component analysis (PCA)\cite{Hotelling1933,Jollife2002}, a popular toolkit for data processing and pattern recognition, has been widely investivated during the last decades. It is often utilized for dimensionality reduction. PCA tries to find a set of projection vectors consisting of the linear combinations of the given data that either maximizes the dispersion of the projected data or minimizes the reconstruction error. These projection vectors construct a low-dimensional subspace that can capture the intrinsic structure of the original data.

However, classical PCA has a fatal drawback. It is sensitive to outliers because using $\ell_2$-norm metric. To overcome this problem, $\ell_2$-norm is substituted by $\ell_1$-norm. Baccini $et$ $al$. \cite{Baccini1996} proposed a PCA based on $\ell_1$-norm ($\ell_1$-PCA) by minimizing the reconstruction error, and correspondingly, a heuristic algorithm based on maximum likelihood estimation was presented. Subsequently, the weighted median alogrithm and the quadratic programming algorithm were proposed in \cite{Ke2003}, where the robustness with $\ell_1$-norm was also addressed. Noticing that the above $\ell_1$-PCA methods are not rotational invariant. Ding $et$ $al$. \cite{Ding2006} proposed a rotational invariant $\ell_1$-norm PCA ($R_1$-PCA) which combines the merits of $\ell_2$- and $\ell_1$-norm PCA. However, Kwak\cite{Kwak2008} pointed out that $R_1$-PCA depends highly on the dimension of  a subspace to be founded. And in \cite{Kwak2008}, Kwak also proposed a PCA based on $\ell_1$-norm (PCA-$\ell_1$) by maximizing the dispersion. The PCA-$\ell_1$ is a greedy method with easy implementation. Then Nie $et$ $al$. proposed its non-greedy version with better experimental results in \cite{Nie2011}. Unlike the aforementioned methods where only the local solution can be obtained, another PCA based on $\ell_1$-norm proposed in \cite{Brooks2013} could find a global solution. In addition, The other PCA methods based on $\ell_1$-norm are concerned with the sparseness, regularization, kernel trick and two-dimensional problem (2D)\cite{Meng2012,Lai2014,Wang2013,Lu2016,Kim2019,Fan2020,Liwicki2013}.

To further improve the robustness, some reaschers noticed the $\ell_p$-norm. Liang $et$ $al$.\cite{Liang2103} proposed the generalized PCA based on $\ell_p$-norm  ($\ell_p$-norm GPCA), where the $\ell_p$-norm  was employed to be as constraint instead of the objective function. In \cite{Kwak2014}, Kwak extended PCA-$\ell_1$ to PCA-$\ell_p$ for an arbitrary $p>0$ and proposed both the greedy and non-greedy algorithms. The proposed algorithms are convergent under the condition of $p \geq 1$. The other PCA methods based on $\ell_p$-norm are concerned with low-rank technique, sparseness and 2D problem\cite{Wang2016,Quach2017,Li2017}. It is naturally believed that $\ell_p$-norm is more robust to $\ell_1$-norm when $0<p<1$, but it does not satisfy Lipschitz-continuity which is important for robustness \cite{Weng2018,Cranko2018}. And most of $\ell_p$-norm PCA methods have been shown to be non-monotonic when $0 < p < 1$. These all restricted the applications of the $\ell_p$-norm PCAs.

In this paper, to give a more robust PCA with Lipschitz-continuity measurement, the T$\ell_1$-norm is studied. Indeed, T$\ell_1$-norm is similar to $\ell_p$-norm ($0<p<1$) in some sense, but it has the stronger suppression effect to outliers and better continuity. Using this norm, we proposed a PCA based on T$\ell_1$-norm (T$\ell_1$PCA) by maximizing T$\ell_1$-norm-based dispersion in the projection space. Correspondingly, to solve the optimization problem, a modified ascent method on sphere is constructed. The results of the preliminary experiments show that T$\ell_1$PCA is more robust than some current PCAs based on $\ell_1$-norm and $\ell_p$-norm.

The rest of this paper is organized as follows. In Section \uppercase\expandafter{\romannumeral2}, we introduce and analyze the T$\ell_1$-norm. In Section \uppercase\expandafter{\romannumeral3}, the optimization problem of our PCA based on T$\ell_1$-norm is formulated. To solve the optimization problem, an ascend method is constructed and investigated in Section \uppercase\expandafter{\romannumeral4}. In Section \uppercase\expandafter{\romannumeral5}, T$\ell_1$PCA is applied to several artifical and real datasets and the performances are compared with some other current PCA methods. Finally, the conclusion follows in Section \uppercase\expandafter{\romannumeral6}.

\section{T$\ell_1$-norm}
In this section, based on the transformed $\ell_1$ (T$\ell_1$) penality function \cite{Nikolova2000,Lv2009,Zhang2018,Zhang2017_1,Zhang2017_2,Ma2019}, T$\ell_1$-norm is  introduced: for a vector $\boldsymbol{x} =[x_1,...,x_n]^T\in\mathbb{R}^{n}$, we define T$\ell_1$-norm as
\begin{equation}
\begin{split}
||\boldsymbol{x}||_{T\ell_1(a)}=||\boldsymbol{x}||_{T\ell_1}=\sum\limits_{i=1}^{n}{\rho_a(x_i)},\\
\end{split}
\end{equation}
where $\rho_a(\cdot)$ is the operator of component
\begin{equation}\label{rho_a}
\begin{split}
\rho_a(t)=\frac{(a+1)|t|}{a+|t|}, \\
\end{split}
\end{equation}
and $a$ is a positive shape parameter. Generally speaking, the norm should satisfy the following three properties: \romannumeral1) Positive definite: for all $\boldsymbol{x} \in \mathbb{R}^n$, $||\boldsymbol{x}|| \geq 0$ and $||\boldsymbol{x}||=0$ iff $\boldsymbol{x}=0$; \romannumeral2) Triangle inequality: for all $\boldsymbol{x}, \boldsymbol{y} \in \mathbb{R}^n$, $||\boldsymbol{x}+\boldsymbol{y}|| \leq ||\boldsymbol{x}||+||\boldsymbol{y}||$; \romannumeral3) Absolutely homogeneity: for all $\boldsymbol{x} \in \mathbb{R}^n$ and scalar $c$, $||c\boldsymbol{x}||=|c|\cdot||\boldsymbol{x}||$. And $||\cdot||$ means the general form of norms. Obviously, T$\ell_1$-norm satisfies the first two properties but not satisfies the third one. So, strictly speaking, T$\ell_1$-norm is not a norm. But in this paper, we still call it T$\ell_1$-norm for convenience.

Further, we discuss the properties of T$\ell_1$-norm and compare them with those of $\ell_p$-norm ($0 \leq p \leq 1$). The $\ell_p$-norm of a vector $\boldsymbol{x}=[x_1,...,x_n]^T\in\mathbb{R}^{n}$ is denoted as
\begin{equation}
\begin{split}
\ ||\boldsymbol{x}||_p=\left( \sum\limits_{i=1}^{n}{\mu_p(x_i)} \right)^{1/p},\\
\end{split}
\end{equation}
where $\mu_p(\cdot)$ is the operator of component
\begin{equation}\label{mu_p}
\begin{split}
\mu_p(t)=|t|^p. \\
\end{split}
\end{equation}
It is known that, e. g. see \cite{Ma2019}, T$\ell_1(a)$-norm is related with $\ell_p$-norm in the following way: for any vector $\boldsymbol{x}=[x_1,...,x_n]^T\in\mathbb{R}^{n}$, with the change of parameter $a$, T$\ell_1(a)$-norm interpolates $\ell_0$-norm and $\ell_1$-norm as
\begin{equation}\label{L12}
\begin{split}
\underset{a\rightarrow 0^+}{\rm{lim}} ||\boldsymbol{x}||_{T\ell_1}=||\boldsymbol{x}||_0, \\
\underset{a\rightarrow\infty}{\rm{lim}} ||\boldsymbol{x}||_{T\ell_1}=||\boldsymbol{x}||_1.\\
\end{split}
\end{equation}
To show the similarity between T$\ell_1$-norm and $\ell_p$-norm, their contours with $a = 10^{-2}, 1, 10^2$ and $p = 0, \frac{1}{2}, 1$ are plotted in \cite{Ma2019}. From the set of figures, it is concluded that T$\ell_1$-norm with $a = 10^{-2}, 1$ and $10^2$ indeed approximates $\ell_0$-norm, $\ell_{1/2}$-norm and $\ell_1$-norm, respectively. This seems to imply that a one-to-one relationship exists between $a$ and $p$ making T$\ell_1(a)$-norm approximate $\ell_p$-norm. For example, $a=1$ corresponds to $p=\frac{1}{2}$ and T$\ell_1(1)$-norm approximates $\ell_{1/2}$-norm. However, investigating the definitions of T$\ell_1$-norm and $\ell_p$-norm carefully, we do find their severe difference. In fact, we need only to compare their component operators as shown in the following property.

\begin{proper}\label{property_1}For any fixed $a$ $(a>0)$ and $p$ $(0<p<1)$, comparing $\rho_a(t)$ with $\mu_{p}(t)$, there exist the following conclusions: the function $\rho_{a}(t)$ is Lipschitz-continuous with the Lipschitz constant $1+a^{-1}$. When $|t|$ increases from 0 to infinity, the function value $\rho_{a}(t)$ increases from 0 to a finite value $a+1$. However, $\mu_{p}(t)$ is not Lipschitz-continuous. When $|t|$ increases from 0 to infinity, the function value $\mu_p(t)$ increases from 0 to infinity.
\end{proper}

\vspace{-0.5cm}
\begin{figure}[!htp]
\centering
\subfigure[]{\includegraphics[width=0.175\textheight]{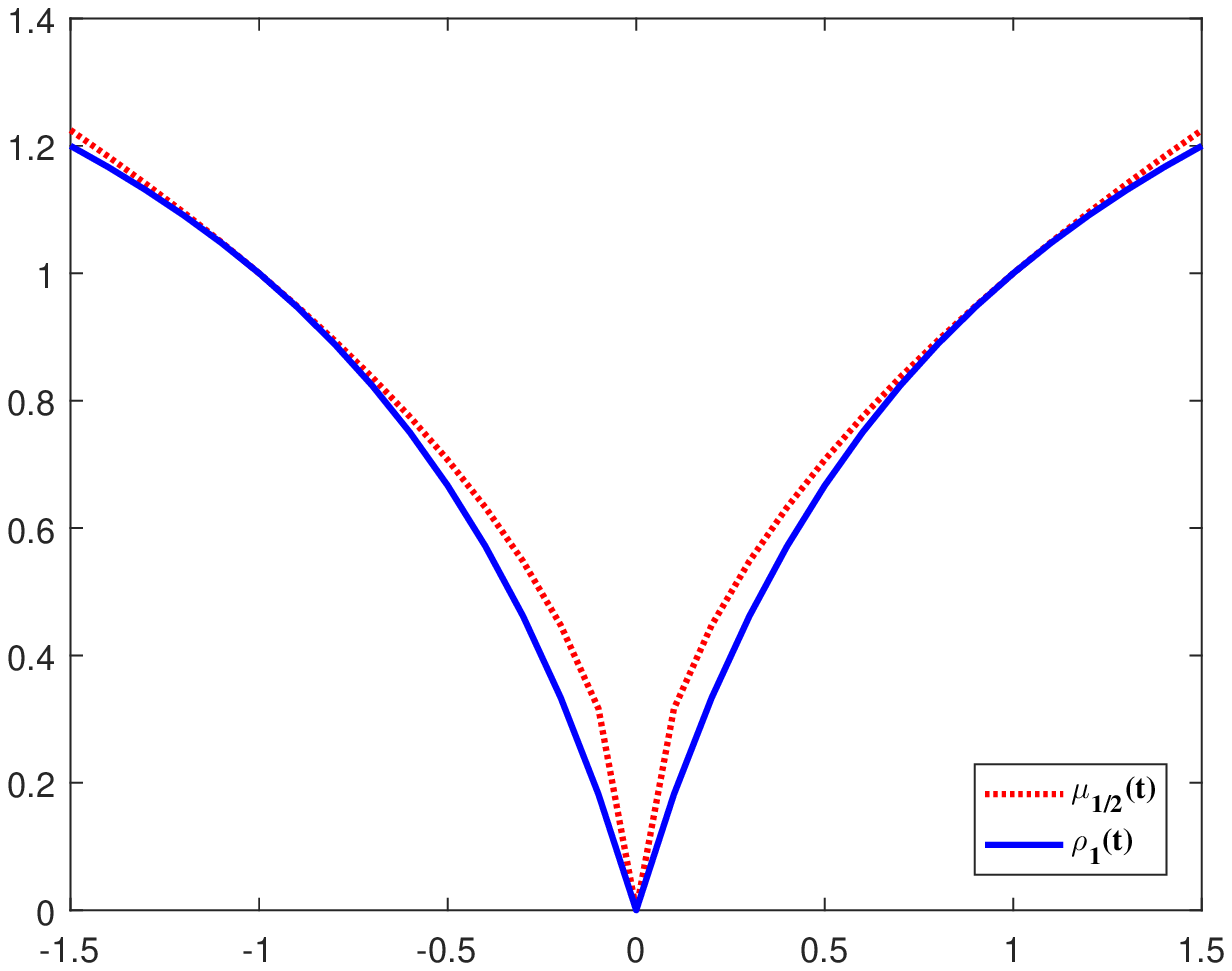}}
\subfigure[]{\includegraphics[width=0.175\textheight]{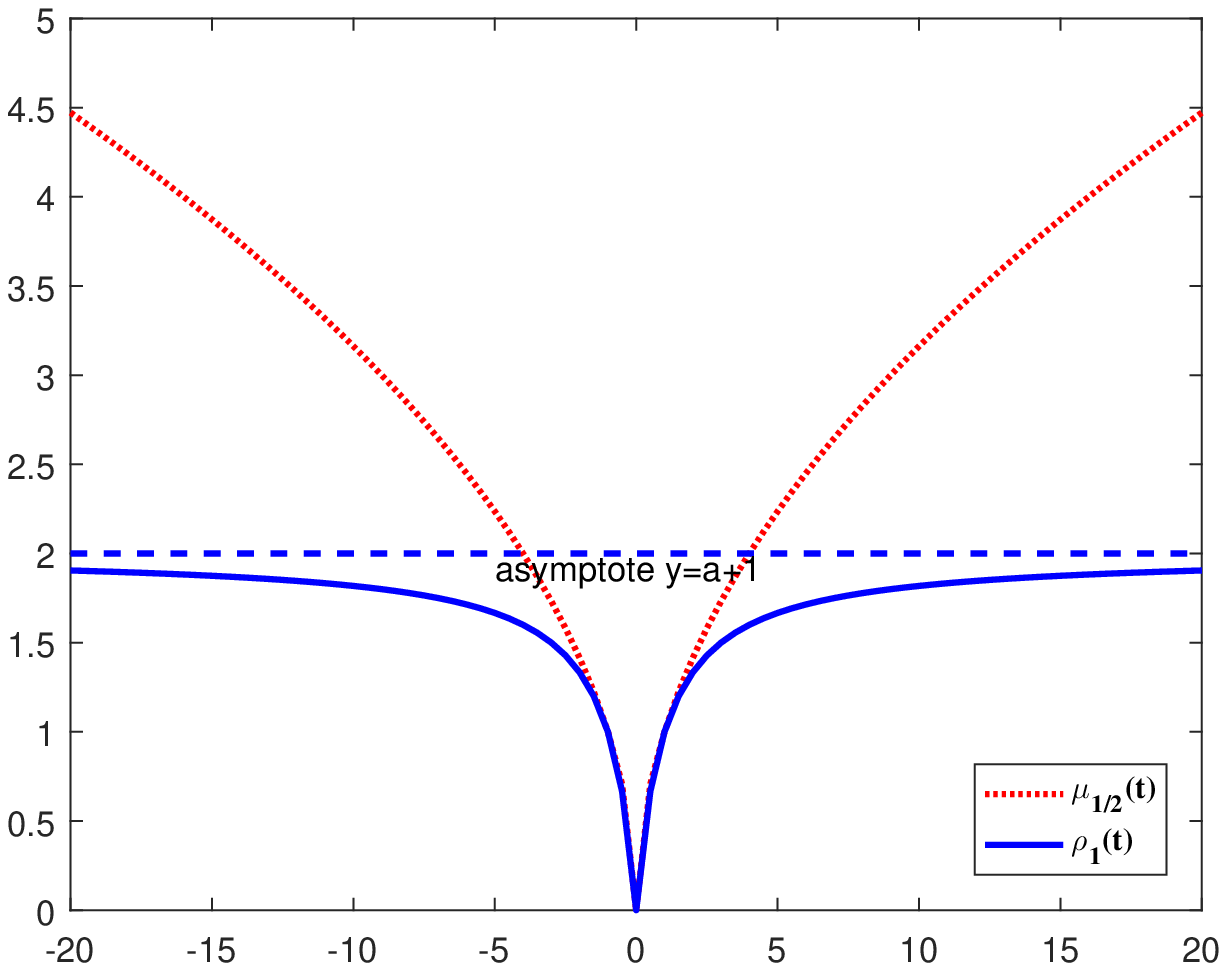}}
\setlength{\abovecaptionskip}{0cm}
\caption{The differences of $\rho_1(t)$ and $\mu_{1/2}(t)$. (a) For small value of $|t|$. (b) For large value of $|t|$.} \label{pTL1_fig}
\end{figure}
\vspace{-0.35cm}

Note that the above property points out the difference between T$\ell_1$-norm with any $a \in \left( 0, \infty \right)$ and $\ell_p$-norm with any $p \in \left( 0, 1 \right)$, including $a = 1$ and any $p \in \left( 0, 1 \right)$, particularly $a = 1$ and $p = \frac{1}{2}$. For the last case, both the component operators $\rho_{a}(t) = \rho_{1}(t)$ and $\mu_{p}(t) = \mu_{1/2}(t)$ are shown in Fig. \ref{pTL1_fig}, where Fig. \ref{pTL1_fig} (a) and Fig. \ref{pTL1_fig} (b) indicate their difference when $|t|$ is small or large, respectively. Corresponding to Fig. \ref{pTL1_fig} (a), we have  $\underset{t\rightarrow 0^+}{\rm{lim}} \rho_{1}^{'}(t)=2$ and $\underset{t\rightarrow 0^+}{\rm{lim}} \mu_{1/2}^{'}(t)=\infty$. And corresponding to Fig. \ref{pTL1_fig} (b), we have $\underset{t\rightarrow \infty}{\rm{lim}} \rho_{1}(t)=2$ and $\underset{t\rightarrow \infty}{\rm{lim}} \mu_{1/2}(t)=\infty$. So, there is a marked difference between T$\ell_1$-norm and $\ell_p$-norm whether $|t|$ is small or lagre.
\vspace{-0.5cm}
\begin{figure}[!htp]
\centering
\subfigure{\includegraphics[width=0.28\textheight]{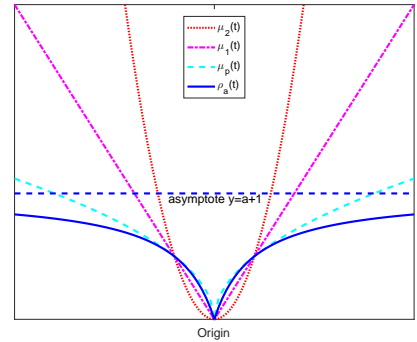}}
\setlength{\abovecaptionskip}{-0.35cm}
\caption{The comparisons of $\mu_{2}(t)$, $\mu_{1}(t)$, $\mu_{p}(t)$ and $\rho_a(t)$. The functions $\mu_{2}(t)$, $\mu_{1}(t)$ and $\mu_{p}(t)$ increase from 0 to $\infty$. The function $\rho_a(t)$ has an asymptote $y=a+1$.}  \label{allTL1_fig}
\end{figure}
\vspace{-0.2cm}

The above discussion implies the advantages of applying T$\ell_1$-norm in robust problem. In fact, retrospect the development course of the norm in PCA: from $\ell_2$-norm to $\ell_1$-norm and then to $\ell_p$-norm; and their corresponding component operators  from $\mu_{2}(t)=|t|^2$ to $\mu_{1}(t)=|t|$ and then to $\mu_{p}(t)=|t|^p$. Fig. \ref{allTL1_fig} shows the figures of these three operators. Obviously, for large $|t|$, when $|t|$ increases, the growth slows down gradually from $\mu_{2}(t)=|t|^2$ to $\mu_{1}(t)=|t|$ and then to $\mu_{p}(t)=|t|^p$. Fig. \ref{allTL1_fig} also shows the figure of the component operator $\rho_{a}(t)$ in T$\ell_1$-norm and the function $\rho_{a}(t)$ with fixed $a$ is bounded. And from $\mu_{p}(t)$ to $\rho_{a}(t)$, the growth slows further. This means that T$\ell_1$-norm  has better suppression effect to outliers. In addition, as discussed above, $\rho_{a}(t)$ has better continuity than $\mu_{p}(t)$, especially its Lipschitz continuity which is good for robustness. Therefore, it can be expected to have better robustness by using T$\ell_1$-norm in PCA than $\ell_p$-norm.

\section{Problem formulation}
Let $\mathbf{X}=[\boldsymbol{x}_1,\cdots,\boldsymbol{x}_n]\in\mathbb{R}^{d\times n}$ be a given data matrix, where $d$ and $n$ denote the dimension of the original space and the number of sapmles respectively. Without loss of generality, suppose the data $\{\boldsymbol{x}_i\}_{i=1}^{n}$ has been centralized, i.e. ,$\sum\limits_{i=1}^{n}{\boldsymbol{x}_i}=0$.

Firstly, we consider the following general PCA maximization problem
\begin{equation}\label{LpPCA model}
\begin{split}
&\underset{\mathbf{W}}{\max}~\sum\limits_{i=1}^{n}{||\boldsymbol{x}_i^T\mathbf{W}||}\\
&~~\hbox{s.t.\ }\mathbf{W}^{T}\mathbf{W}=\mathbf{I},
\end{split}
\end{equation}
where $\mathbf{W}= [\boldsymbol{w}_1,\cdots,\boldsymbol{w}_m]\in\mathbb{R}^{d\times m}$ is the projection matrix consisted of $m$ projection vectors. When $||\cdot||$ is subtituted by $\ell_2$-, $\ell_1$-, and $\ell_p$-norm, it is identical to $\ell_2$-PCA, PCA-$\ell_1$ and PCA-$\ell_p$ respectively. However, it is difficult to solve equation \eqref{LpPCA model} directly for some norms. To address this problem, it is simplified into a series of $m=1$ problems and \eqref{LpPCA model} becomes the following optimization probelm
\begin{equation}\label{LpPCA model2}
\begin{split}
\underset{\boldsymbol{w}}{\max}&~~{||\mathbf{X}^{T}\boldsymbol{w}||}\\
\hbox{s.t.\ }&\boldsymbol{w}^{T}\boldsymbol{w}=1,
\end{split}
\end{equation}
When $m>1$, greedy method could be utilized to solve.

In this paper, we employ T$\ell_1$-norm into equation \eqref{LpPCA model} and construct the PCA based on T$\ell_1$-norm as follows
\begin{equation}\label{TL1PCA model}
\begin{split}
\underset{\mathbf{W}}{\max}&~\sum\limits_{i=1}^{n}{||\boldsymbol{x}_i^{T}\mathbf{W}||_{T\ell_1}}\\
\hbox{s.t.\ }&\mathbf{W}^{T}\mathbf{W}=\mathbf{I}.
\end{split}
\end{equation}
When $m>1$, it is also difficult to find an optimal solution of \eqref{TL1PCA model}. We also simplify the problem into a series of $m=1$ optimization probelms by using a greedy method, therefore, we will first solve the following optimization problem
\begin{equation}\label{TL1PCA model2}
\begin{split}
\underset{\boldsymbol{w}}{\max}&~~{f(\boldsymbol{w})=||\mathbf{X}^{T}\boldsymbol{w}||_{T\ell_1}}\\
&~~\hbox{s.t.\ }\boldsymbol{w}^T\boldsymbol{w}=1 ,
\end{split}
\end{equation}
which is equivalent to
\begin{equation}\label{TL1PCA model3}
\begin{split}
\underset{\boldsymbol{w}}{\max}&~~{f(\boldsymbol{w})=\sum\limits_{i=1}^{n}{\frac{(a+1)|\boldsymbol{x}_i^T\boldsymbol{w}|}{a+|\boldsymbol{x}_i^T\boldsymbol{w}|}}}\\
&~~~~~~\hbox{s.t.\ }\boldsymbol{w}^T\boldsymbol{w}=1.
\end{split}
\end{equation}

\section{Algorithm}
Since the problem \eqref{TL1PCA model} is non-convex and non-smooth, the traditional convex optimization technique could not be used directly. Therefore, we first consider to solve problem \eqref{TL1PCA model3}, which is a relatively simple situation of problem \eqref{TL1PCA model}. Even so, it is also difficult to solve \eqref{TL1PCA model3} since it has the division operator of absolute value functions. Although the alternating direction method of multipliers (ADMM) \cite{Boyd2011, Li2020} is a popular method to solve non-convex and non-smooth problem, it does not apply to our problem because of the constraint $\boldsymbol{w}^T\boldsymbol{w} = 1$. Motivated by the methods in \cite{Yu2016} and \cite{Boyd2004}, we design a modified gradient ascent method on a sphere to solve \eqref{TL1PCA model3}. And the method could guarantee the constraint.

Here, we need to compute the gradient of $f(\boldsymbol{w})$ with respect to $\boldsymbol{w}$ as follows
\begin{equation}\label{Gradient}
\begin{split}
\nabla f(\boldsymbol{w})=\sum\limits_{i=1}^{n}{\frac{a(a+1)sign(\boldsymbol{x}_i^T\boldsymbol{w})\boldsymbol{x}_i}{(a+|\boldsymbol{x}_i^T\boldsymbol{w}|)^2}},\\
\end{split}
\end{equation}
where
$$sign(t)= \begin{cases} 1,& t > 0\\ 0,& t=0\\-1,& t<0 \end{cases}, $$
and a random positive vector is added on $\boldsymbol{w}$ to satisfy $\boldsymbol{x}_i^T\boldsymbol{w} \neq 0$ when $\boldsymbol{x}_i^T\boldsymbol{w}=0$. Then we project $\nabla f(\boldsymbol{w})$ onto the tangent plane of $\boldsymbol{w}$ on the unit sphere as $\boldsymbol{g}= \nabla f(\boldsymbol{w})-\langle\nabla f(\boldsymbol{w}),\boldsymbol{w}\rangle \boldsymbol{w}$ and normalize it as $\boldsymbol{g}_0=\boldsymbol{g}/||\boldsymbol{g}||_{2}$, where $\langle\cdot, \cdot\rangle$ denotes as inner product of vectors and the unit sphere is determined by the constrain $\boldsymbol{w}^T\boldsymbol{w} = 1$. For the $t$-th step, $\boldsymbol{w}(t)^T\boldsymbol{w}(t) = 1$ and $\boldsymbol{w}(t)^T\boldsymbol{g}_0(t) = 0$, then we have the following update rule
$$\boldsymbol{w}(t+1)=\boldsymbol{w}(t)\cos(\theta_t)+\boldsymbol{g}_0(t)\sin(\theta_t),$$
where $\theta_t$ controls the step size.

\begin{center}
\begin{tabular}{l}
\toprule
\noindent{$\mathbf{Algorithm~1. Algorithm~for~Solving~(\ref{TL1PCA model3})}$}\\
\midrule
\textbf{Input}: The data matrix $\mathbf{X} \in\mathbb{R}^{d\times n}$, the parameter $a$ of\\
\qquad~~        T$\ell_1$-norm. \\
\textbf{Output}: The projection vector $\boldsymbol{w}$.\\
\textbf{Initialization}: Find $k^{*}=\underset{1\le k\le n}{argmax}~f(\boldsymbol{x}_k/||\boldsymbol{x}_k||_{2})$, where  \\
\qquad~~~~~~~~~~~ $f(\boldsymbol{w})=\sum\limits_{i=1}^{n}{\frac{(a+1) |\boldsymbol{x}_{i}^{T} \boldsymbol{w}|}{a+|{\boldsymbol{x}_i}^T\boldsymbol{w}|}}$.\\
\qquad~~~~~~~~~~~ Set $\boldsymbol{w}(0)=\boldsymbol{x}_{k^*}/||\boldsymbol{x}_{k^*}||_{2}$. Give $\theta_0\in(0,\pi/2]$\\
\qquad~~~~~~~~~~~   randomly.\\
\textbf{Repeat}:\\
\quad Compute the gradient $\nabla f(\boldsymbol{w}(t))$ of $f$ at $\boldsymbol{w}(t)$ by \eqref{Gradient};  \\
\quad If $\boldsymbol{w}(t)$ and $\nabla f(\boldsymbol{w}(t))$ are collinear \\
\qquad $\nabla f(\boldsymbol{w}(t)) \leftarrow \nabla f(\boldsymbol{w}(t))+ \boldsymbol{\xi} $, where $\boldsymbol{\xi}$ is the perturb-\\
\qquad ation satisfying that $\nabla f(\boldsymbol{w}(t))^T \boldsymbol{\xi}>0$.\\
\quad End if;\\
\quad Project $\nabla f(\boldsymbol{w}(t))$ onto the tangent plane of $\boldsymbol{w}(t)$,  \\
\quad i.e., $\boldsymbol{g}(t)= \nabla f(\boldsymbol{w}(t))-\langle \nabla f(\boldsymbol{w}(t)),\boldsymbol{w}(t)\rangle \boldsymbol{w}(t)$, then \\
\quad normalize $\boldsymbol{g}(t)$, $\boldsymbol{g}_0(t) = \boldsymbol{g}(t)/||\boldsymbol{g}(t)||_{2}$;\\
\quad Update $\boldsymbol{w}(t+1)=\boldsymbol{w}(t)\cos(\theta_t)+\boldsymbol{g}_0(t)\sin(\theta_t)$. \\
\quad Repeat: \\
\quad~~ $\theta_t\leftarrow\theta_t/2$  \\
\quad Until $f(\boldsymbol{w}(t+1))\ge f(\boldsymbol{w}(t))$; \\
\quad Update $\theta_{t+1} = min(2\theta_t,\pi/2)$;\\
\textbf{Until} convergence   \\
\bottomrule
\end{tabular} \label{Algorithm1}\\
\end{center}

The above update rule guarantees that $\boldsymbol{w}(t+1)$ remains of unit length. However, when $\boldsymbol{w}$ and $\nabla f(\boldsymbol{w})$ are collinear, it is not applicable. Inspired by noisy gradient descent algorithm (NGD) \cite{Jain2017}, we add a perturbation  to $\nabla f(\boldsymbol{w})$ to escape this problem. In addition, to accelerate the convergence, $\theta_{t}$ is chosen as an adaptive step size \cite{Yu2016}. The details are described in Algorithm 1. And for Algorithm 1, we have the following proposition.

\begin{prop}\label{Convergence Alg1} The Algorithm 1 will monotonically increase the objective of the problem \eqref{TL1PCA model3} in each iteration.
\end{prop}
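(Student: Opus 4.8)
The plan is to observe that the asserted monotonicity is built into the acceptance test of the inner repeat loop: the new iterate $\boldsymbol{w}(t+1)$ is retained only once $f(\boldsymbol{w}(t+1))\ge f(\boldsymbol{w}(t))$ holds. Hence the entire content of the proposition reduces to showing that this inner loop terminates, i.e.\ that halving $\theta_t$ finitely often always yields a feasible point with no smaller objective value. I would establish this by proving that the normalized tangential direction $\boldsymbol{g}_0(t)$ is a genuine ascent direction for $f$ along the geodesic $\boldsymbol{w}(\theta)=\boldsymbol{w}(t)\cos(\theta)+\boldsymbol{g}_0(t)\sin(\theta)$.

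First I would reduce the question to a scalar one by setting $h(\theta)=f(\boldsymbol{w}(\theta))$. Since $\boldsymbol{w}(t)^T\boldsymbol{g}_0(t)=0$ and $||\boldsymbol{w}(t)||_2=||\boldsymbol{g}_0(t)||_2=1$, one checks that $\boldsymbol{w}(\theta)^T\boldsymbol{w}(\theta)=1$ for every $\theta$, so every trial point stays feasible. The objective $f$ is differentiable at any $\boldsymbol{w}$ with $\boldsymbol{x}_i^T\boldsymbol{w}\ne 0$ for all $i$ --- a condition the algorithm enforces through its random perturbation of $\boldsymbol{w}$ --- and because each $\boldsymbol{x}_i^T\boldsymbol{w}(\theta)$ depends continuously on $\theta$, these inner products remain nonzero for $\theta$ near $0$; hence $h$ is differentiable at $\theta=0$ with $h'(0)=\nabla f(\boldsymbol{w}(t))^T\boldsymbol{g}_0(t)$.

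The core step is to evaluate this directional derivative. Writing $\boldsymbol{g}=\nabla f(\boldsymbol{w}(t))-\langle\nabla f(\boldsymbol{w}(t)),\boldsymbol{w}(t)\rangle\boldsymbol{w}(t)$ and using $||\boldsymbol{w}(t)||_2=1$, a short expansion gives
\[
\nabla f(\boldsymbol{w}(t))^T\boldsymbol{g}=||\nabla f(\boldsymbol{w}(t))||_2^2-\langle\nabla f(\boldsymbol{w}(t)),\boldsymbol{w}(t)\rangle^2=||\boldsymbol{g}||_2^2,
\]
so that $h'(0)=\nabla f(\boldsymbol{w}(t))^T\boldsymbol{g}/||\boldsymbol{g}||_2=||\boldsymbol{g}||_2\ge 0$. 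By the Cauchy--Schwarz inequality this is strictly positive exactly when $\nabla f(\boldsymbol{w}(t))$ and $\boldsymbol{w}(t)$ are not collinear, which is precisely the branch in which the algorithm proceeds unperturbed. A strictly positive $h'(0)$ forces $h(\theta)>h(0)$ for all sufficiently small $\theta>0$; therefore, after finitely many halvings, $\theta_t$ drops below that threshold, the exit test $f(\boldsymbol{w}(t+1))\ge f(\boldsymbol{w}(t))$ is met, and the inner loop terminates with an objective that has not decreased.

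The part I expect to require the most care is the degenerate collinear case, where $\boldsymbol{g}=0$, the direction $\boldsymbol{g}_0$ is undefined, and $h'(0)=0$ because $\boldsymbol{w}(t)$ is already a first-order stationary point of the constrained problem, so no first-order ascent direction exists. Here I would invoke the perturbation $\nabla f(\boldsymbol{w}(t))\leftarrow\nabla f(\boldsymbol{w}(t))+\boldsymbol{\xi}$ with $\nabla f(\boldsymbol{w}(t))^T\boldsymbol{\xi}>0$: this restores a nonzero tangential component so that $\boldsymbol{g}_0$ becomes well defined, and, following the NGD escape idea, it lets the line search probe away from the stationary point. Crucially, monotonicity is never at risk in this branch either, since a step is accepted only when the same test $f(\boldsymbol{w}(t+1))\ge f(\boldsymbol{w}(t))$ passes; the only remaining subtlety is that at a genuine local maximum no increase is available, which is exactly the situation in which the outer loop should be deemed to have reached convergence.
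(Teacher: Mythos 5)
Your proposal is correct and takes essentially the same route as the paper's proof: both establish that the normalized tangential projection $\boldsymbol{g}_0(t)$ is an ascent direction (your identity $\nabla f(\boldsymbol{w}(t))^T\boldsymbol{g}=\|\boldsymbol{g}\|_2^2$ is exactly the paper's $\langle \boldsymbol{g}(t),\nabla f(\boldsymbol{w}(t))\rangle=\|\nabla f(\boldsymbol{w}(t))\|_2^2(1-\cos^2\alpha)\ge 0$ in different notation), both rely on the acceptance test $f(\boldsymbol{w}(t+1))\ge f(\boldsymbol{w}(t))$ of the halving loop, and both handle the collinear case via the perturbation $\boldsymbol{\xi}$. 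Your write-up is in fact somewhat sharper than the paper's, since you isolate the genuine content of the claim --- finite termination of the inner backtracking loop, settled by $h'(0)=\|\boldsymbol{g}\|_2>0$ in the non-collinear case --- which the paper's proof leaves implicit.
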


\begin{proof}
As we know, $\nabla f(\boldsymbol{w}(t))$ is the fastest ascent direction. When $\boldsymbol{w}(t)$ and $\nabla f(\boldsymbol{w}(t))$ are collinear, we set
$$\nabla f(\boldsymbol{w}(t)) \leftarrow \nabla f(\boldsymbol{w}(t))+ \boldsymbol{\xi} ,$$
it is clear that $\nabla f(\boldsymbol{w}(t))$  is still an ascent direction after adding the perturbation $\boldsymbol{\xi}$, because $\boldsymbol{\xi}$ satisfies $\nabla f(\boldsymbol{w}(t))^T \boldsymbol{\xi}>0$.

Then by projecting $\nabla f(\boldsymbol{w}(t))$ onto the tangent plane of $\boldsymbol{w}(t)$, we obtain $ \boldsymbol{g}(t)= \nabla f(\boldsymbol{w}(t))-\langle \nabla f(\boldsymbol{w}(t)),\boldsymbol{w}(t)\rangle \boldsymbol{w}(t) $ and normalize it as $\boldsymbol{g}_0(t) = \boldsymbol{g}(t)/||\boldsymbol{g}(t)||_{2}$. Since $\langle \boldsymbol{g}(t), \nabla f(\boldsymbol{w}(t))\rangle=||\nabla f(\boldsymbol{w}(t))||_2^2(1-cos^2(\alpha)) \geq 0$, where $\alpha$ is the angle between $\nabla f(\boldsymbol{w}(t))$ and $\boldsymbol{w}(t)$, the direction $\boldsymbol{g}_0(t)$ is also an ascent direction. Then we have the update rule
$$\boldsymbol{w}(t+1)=\boldsymbol{w}(t)\cos(\theta_t)+\boldsymbol{g}_0(t)\sin(\theta_t),$$
where $\theta_t \in (0,\pi/2]$ is the step size. And we set $\theta_{t}\leftarrow \theta_{t}/2$ until $f(\boldsymbol{w}(t+1)) \geq f(\boldsymbol{w}(t))$. Since $\boldsymbol{w}(t)$ and $\boldsymbol{g}_0(t)$ are orthogonal, the update rule keeps the unit vector of $\boldsymbol{w}(t+1)$. To acclerate the convergence, we set $\theta_{t+1} = min(2\theta_t,\pi/2)$ for the next iteration.
\end{proof}

As the objective of problem \eqref{TL1PCA model3} has an upper bound, proposition 1 indicates that the Algorithm 1 is convergent.

Now we can obtain the first projection vector $\boldsymbol{w}_1$ by calling Algorithm 1. To solve more than one projection vectors, we use a genernal orthogonalization method to compute the remaining vectors. Firstly, we give the details of our orthogonalization procedure in Algorithm 2. Then, using the inductive method, proposition 2 shows that the projection vectors solved by Algorithm 2 are strictly orthogonal. Its proof also describes the details of Algorithm 2.
\begin{center}
\begin{tabular}{l}
\toprule
\noindent{$\mathbf{Algorithm~2.}$T$\ell_1$PCA}\\
\midrule
\textbf{Input}: The data matrix $\mathbf{X} \in\mathbb{R}^{d\times n}$, the parameter of\\
\qquad~~        T$\ell_1$-norm $a$, and the number of projection\\
\qquad~~        vectors $m$.\\
\textbf{Output}: The projection matrix $\mathbf{W}$.\\
\textbf{Initialization}: $\mathbf{W}_0\leftarrow\mathbf{\varnothing}$, $\mathbf{T}_0\leftarrow\mathbf{I}$, $\mathbf{X}_0\leftarrow\mathbf{X}$.\\
$j\leftarrow1$.\\
\textbf{Repeat}:\\
\quad $\mathbf{X}_j\leftarrow\mathbf{T}_{j-1}^{T}\mathbf{X}_0$; \\
\quad Solve problem \eqref{TL1 model4} by Algorithm $1$ and get its solution    \\
\quad $\boldsymbol{w}_j$, compute the $j$-th projection vector $\boldsymbol{w}_j\leftarrow\mathbf{T}_{j-1}\boldsymbol{w}_j$;\\
\quad Update $\mathbf{W}_j\leftarrow\lbrack\mathbf{W}_{j-1} ,\boldsymbol{w}_j\rbrack$;  \\
\quad Compute $\mathbf{T}_j$ by solving the linear equations $\mathbf{W}_{j}^T \hbox{T}=0$  \\
\quad and following the Gram-Schmidt procedure; \\
\textbf{Until} $j=m$  \\
\bottomrule
\end{tabular} \label{Algorithm}\\
\end{center}

\begin{prop}\label{Orth Alg2} The projection vectors $\boldsymbol{w}_1,\cdots,\boldsymbol{w}_{m}$ obtained by Algorithm 2 are orthonormal.
\end{prop}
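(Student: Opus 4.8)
The plan is to argue by induction on the iteration index $j$, carrying along a single invariant about the deflation matrix: at the start of iteration $j$, the matrix $\mathbf{T}_{j-1}\in\mathbb{R}^{d\times(d-j+1)}$ has orthonormal columns that span the orthogonal complement of $\mathrm{span}(\boldsymbol{w}_1,\cdots,\boldsymbol{w}_{j-1})$. Algebraically, the two identities I want to keep true are $\mathbf{T}_{j-1}^T\mathbf{T}_{j-1}=\mathbf{I}$ and $\mathbf{W}_{j-1}^T\mathbf{T}_{j-1}=\mathbf{0}$. Once these hold throughout, the proposition follows immediately, since the columns of $\mathbf{W}_m$ are exactly $\boldsymbol{w}_1,\cdots,\boldsymbol{w}_m$. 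Throughout I will write $\tilde{\boldsymbol{w}}_j$ for the unit vector returned by Algorithm 1 on the reduced data, reserving $\boldsymbol{w}_j$ for the lifted vector $\mathbf{T}_{j-1}\tilde{\boldsymbol{w}}_j$, so as to disentangle the overloaded notation in Algorithm 2.

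First I would dispose of the base case $j=1$: here $\mathbf{T}_0=\mathbf{I}$ and $\mathbf{W}_0=\varnothing$, so both identities hold vacuously, and $\boldsymbol{w}_1=\mathbf{T}_0\tilde{\boldsymbol{w}}_1=\tilde{\boldsymbol{w}}_1$. Because the update rule of Algorithm 1 preserves the sphere constraint $\tilde{\boldsymbol{w}}_1^T\tilde{\boldsymbol{w}}_1=1$, the single vector $\boldsymbol{w}_1$ is orthonormal in the trivial sense, and the invariant at level $j=1$ is satisfied.

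For the inductive step I assume the invariant for $\mathbf{T}_{j-1}$ and that $\boldsymbol{w}_1,\cdots,\boldsymbol{w}_{j-1}$ are already orthonormal. Algorithm 1, run on $\mathbf{X}_j=\mathbf{T}_{j-1}^T\mathbf{X}_0$, returns a unit vector $\tilde{\boldsymbol{w}}_j$, and we set $\boldsymbol{w}_j=\mathbf{T}_{j-1}\tilde{\boldsymbol{w}}_j$. Using $\mathbf{T}_{j-1}^T\mathbf{T}_{j-1}=\mathbf{I}$ gives $\boldsymbol{w}_j^T\boldsymbol{w}_j=\tilde{\boldsymbol{w}}_j^T\mathbf{T}_{j-1}^T\mathbf{T}_{j-1}\tilde{\boldsymbol{w}}_j=\tilde{\boldsymbol{w}}_j^T\tilde{\boldsymbol{w}}_j=1$, so $\boldsymbol{w}_j$ is a unit vector; and for each $i\le j-1$, the identity $\boldsymbol{w}_i^T\mathbf{T}_{j-1}=\mathbf{0}$ (the $i$-th row of $\mathbf{W}_{j-1}^T\mathbf{T}_{j-1}=\mathbf{0}$) yields $\boldsymbol{w}_i^T\boldsymbol{w}_j=\boldsymbol{w}_i^T\mathbf{T}_{j-1}\tilde{\boldsymbol{w}}_j=0$. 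Hence $\boldsymbol{w}_1,\cdots,\boldsymbol{w}_j$ are orthonormal, which is the conclusion I want at level $j$.

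The hard part will be re-establishing the invariant one level up, that is, verifying that the new $\mathbf{T}_j$ produced by solving $\mathbf{W}_j^T T=\mathbf{0}$ and orthonormalizing via Gram--Schmidt again satisfies $\mathbf{T}_j^T\mathbf{T}_j=\mathbf{I}$ and $\mathbf{W}_j^T\mathbf{T}_j=\mathbf{0}$. The column space of any solution of $\mathbf{W}_j^T T=\mathbf{0}$ is exactly the null space of $\mathbf{W}_j^T$, i.e., the orthogonal complement of $\mathrm{span}(\boldsymbol{w}_1,\cdots,\boldsymbol{w}_j)$; since Gram--Schmidt replaces a spanning set by orthonormal linear combinations of the same vectors, the resulting columns remain in that complement, giving $\mathbf{W}_j^T\mathbf{T}_j=\mathbf{0}$, while orthonormalization delivers $\mathbf{T}_j^T\mathbf{T}_j=\mathbf{I}$. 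The one point requiring care is that $\boldsymbol{w}_1,\cdots,\boldsymbol{w}_j$ be linearly independent, so that this complement has the expected dimension $d-j$ and the null-space solve does not degenerate; this is exactly guaranteed by the orthonormality established in the preceding paragraph. Closing the induction at $j=m$ then gives the claim.
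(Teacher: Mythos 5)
Your proof is correct and follows essentially the same route as the paper's: induction on the number of projection vectors, with the new vector's orthogonality to its predecessors coming from the fact that it is lifted through $\mathbf{T}_{j-1}$, whose columns are made orthonormal to $\mathbf{W}_{j-1}$ by the null-space solve and Gram--Schmidt step. Your version is tighter where the paper is informal --- you explicitly carry the invariant $\mathbf{T}_{j-1}^T\mathbf{T}_{j-1}=\mathbf{I}$, $\mathbf{W}_{j-1}^T\mathbf{T}_{j-1}=\mathbf{0}$ and use it to verify unit length of the lifted vector, a point the paper dismisses as ``obvious'' --- but the underlying argument is the same.
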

\begin{proof}
According to the inductive assumption, we first assume that vectors $\boldsymbol{w}_1,\cdots,\boldsymbol{w}_{m-1}$ are orthonormal in a $d$-dimensional subspace. Thus $\mathbf{W}_{m-1}=[\boldsymbol{w}_1,\cdots,\boldsymbol{w}_{m-1}]\in\mathbb{R}^{d\times (m-1)}$ is an orthonormal matrix and we denote Span$\mathbf{V}_{m-1}=(\boldsymbol{w}_1,...,\boldsymbol{w}_{m-1})$. Then $\mathbf{V}_{m-1}$ is a $\left( m-1 \right)$-dimensional subspace. Recall that the primary goal to search for a vector $\boldsymbol{w} \in \mathbb{R}^{d}$ satisfying problem \eqref{TL1PCA model3}. Once the subspace $\mathbf{V}_{m-1}$ has been obtained, we need to solve $\boldsymbol{w}_m$ through the following optimization problem, which could be solved by Algorithm 1
\begin{equation}\label{TL1 model4}
\begin{split}
\underset{\boldsymbol{w} \in \mathbf{V}_{m-1}^{\perp}}{\max}{f(\boldsymbol{w})}\\
~~\hbox{s.t.\ }\boldsymbol{w}^{T}\boldsymbol{w}=1,
\end{split}
\end{equation}
where $\mathbf{V}_{m-1}^{\perp}$ is the null space of $\mathbf{V}_{m-1}$ and dim$\mathbf{V}_{m-1}^{\perp}=d-m+1$. Then update $\boldsymbol{w}_m\leftarrow\mathbf{T}_{m-1}\boldsymbol{w}_m$, where $\mathbf{T}_{m-1} \in \mathbb{R}^{d\times (d-m+1)}$. It is obvious that $\boldsymbol{w}_m$ is orthogonal to $\boldsymbol{w}_j,j=1,...,m-1.$ Therefore, $\mathbf{W}_{m}=[\boldsymbol{w}_1,\cdots,\boldsymbol{w}_{m}]\in\mathbb{R}^{d\times m}$ is also an orthonormal matrix.

In nature, The data is projected onto the subspace $\mathbf{V}_{m-1}^{\perp}$ to implement Algorithm 1. At last, to perform the next iteration, we need to find a basis $\mathbf{T}_{m}=(\boldsymbol{t}_1,...,\boldsymbol{t}_{d-m}) \in \mathbb{R}^{d\times (d-m)}$ of $\mathbf{V}_{m}^{\perp}$ . To obtain $\mathbf{T}_{m}$, we need only to solve the linear equation $\mathbf{W}_{m}^{T}\hbox{T}=0$ and make this basis orthonormal by following the Schmidt orthogonalization.
\end{proof}

\section{Experiments}
In this section, we evaluate the performance of T$\ell_1$PCA on an artifical dataset and two human face databases including Yale\cite{Yale1997} and Jaffe\cite{Jaffe1998}. To demonstrate the robustness of our method, we add outliers in the artifical dataset and random block noise in the face databases. For comparsion, the calssical PCA\cite{Jollife2002}, PCA-$\ell_1$\cite{Nie2011}, PCA-$\ell_p$\cite{Kwak2014}, and $\ell_p$SPCA\cite{Li2017} have also been utilized. We use the nearest neighbor classifier (1-NN) for classification, which assigns a test sample to the class of its nearest neighbor in the training samples. The implementation environment is MATLAB R2017a.

\subsection{A Toy Example}
Firstly, we evaluate the robustness of T$\ell_1$PCA on a two-dimensional artifical dataset, containing 30 data points and 4 outliers. The 30 data points are generated by picking  $x_i$ from -3 to 3 with the same interval and yielding $y_i$ from the Gaussian distribution $N(x_i,1)$, satisfying that the summation over $x_i$, $y_i$ equals to zero, and depicted by navy blue "$\bullet$". 4 outliers are of coordinates [-4,4.8], [-3.7,5.1], [-3.3,6] and [-2.4,5.5], depicted by red "$\ast$". The dataset is shown in Fig. \ref{Toyfig}.

Obviously, when discarding outliers, the included angle between the ideal projection direction of the dataset and $x$-axis is $45{\degree}$, where the ideal projection direction is depicted by black solid line. The first principal components of T$\ell_1$PCA, PCA, PCA-$\ell_1$, PCA-$\ell_p$ and $\ell_p$SPCA are obtained by applying them to the artifical dataset with outliers under different parameters. The parameters $a$ in T$\ell_1$PCA and $p$ in PCA$\ell_p$ and $\ell_p$SPCA are chosen from $a=100, 1, 0.01$ and $p= 1, 0.5, 0.01$, respectively. These principal components and their included angles with the ideal projection direction are also plotted in Fig. \ref{Toyfig}.

From Fig. \ref{Toyfig}, we see that the principal components learned by PCA, PCA-$\ell_1$, PCA-$\ell_p$ and $\ell_p$SPCA are severely deviated from the ideal projection direction, and the included angle of $\ell_p$SPCA is up to $35.1^\circ$ when $p = 0.01$. However, the principal components learned by T$\ell_1$PCA are slightly deviated from the ideal projection direction, especially when paramater $a$ is small. Its principal components are much closer to the ideal projection direction which indicate that T$\ell_1$PCA is more robust to outliers than the other PCAs.

\begin{figure}[!htbp]
\centering
\subfigure[PCA and PCA-$\ell_1$]{\includegraphics[width=0.17\textheight]{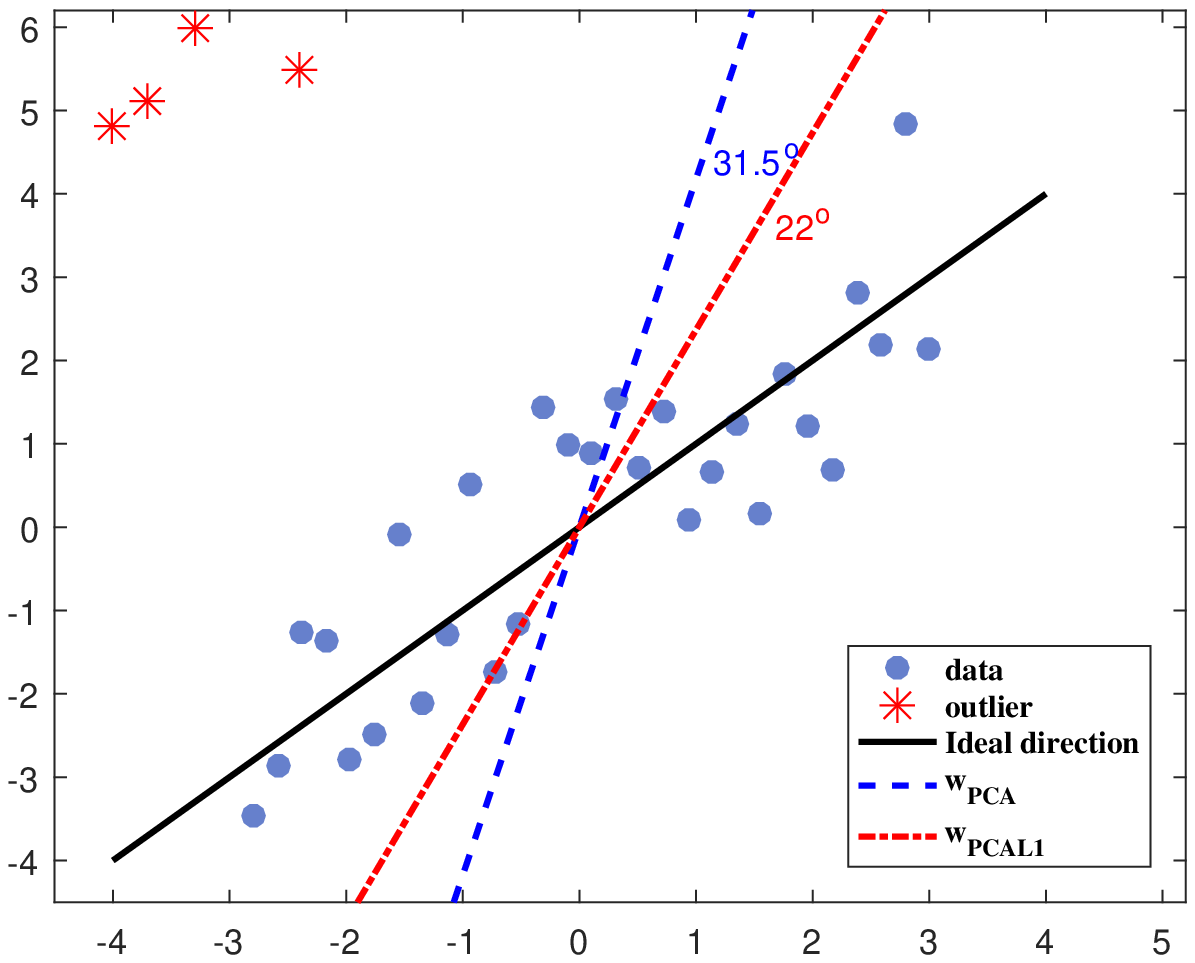}}
\subfigure[PCA-$\ell_p$]{\includegraphics[width=0.17\textheight]{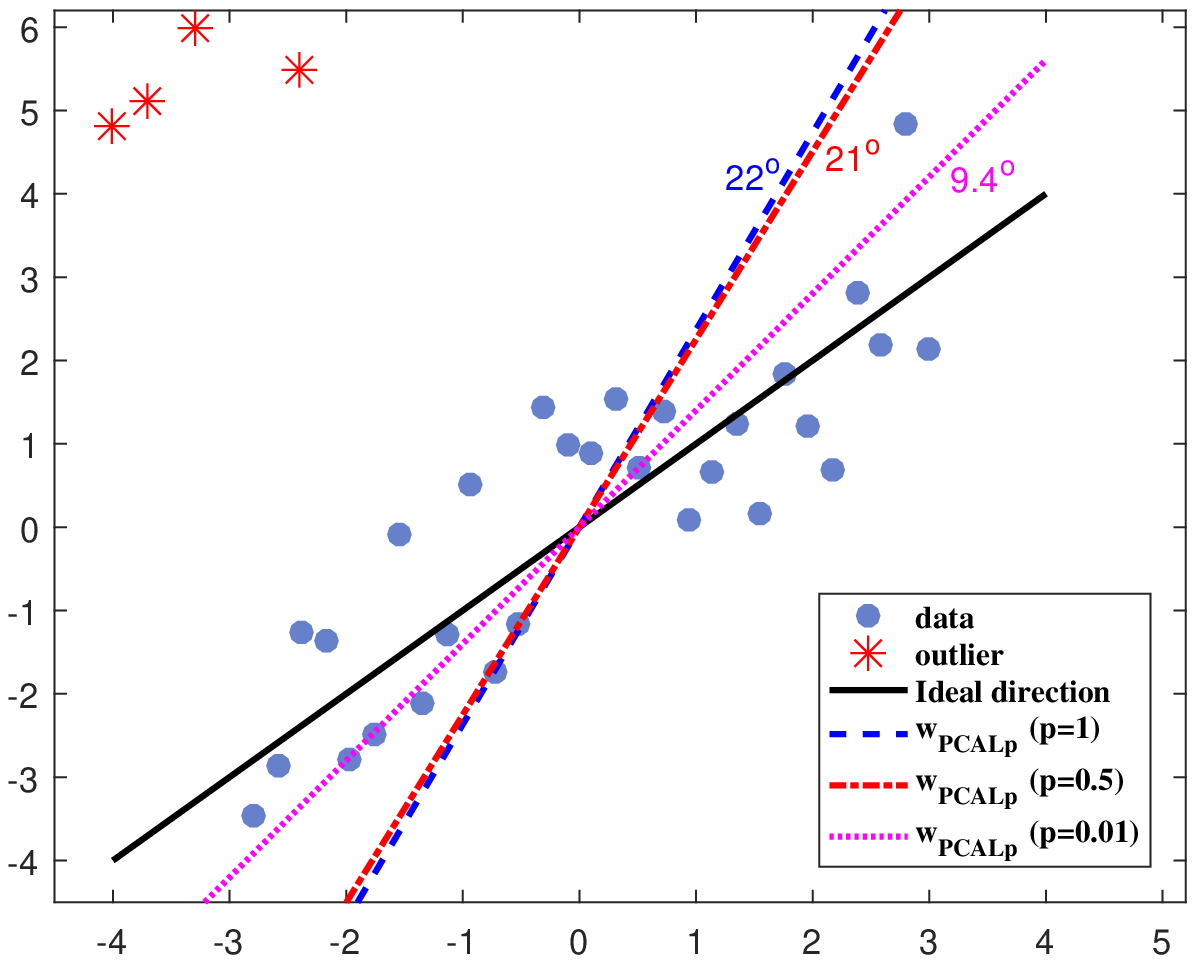}}\\
\subfigure[$\ell_p$SPCA]{\includegraphics[width=0.17\textheight]{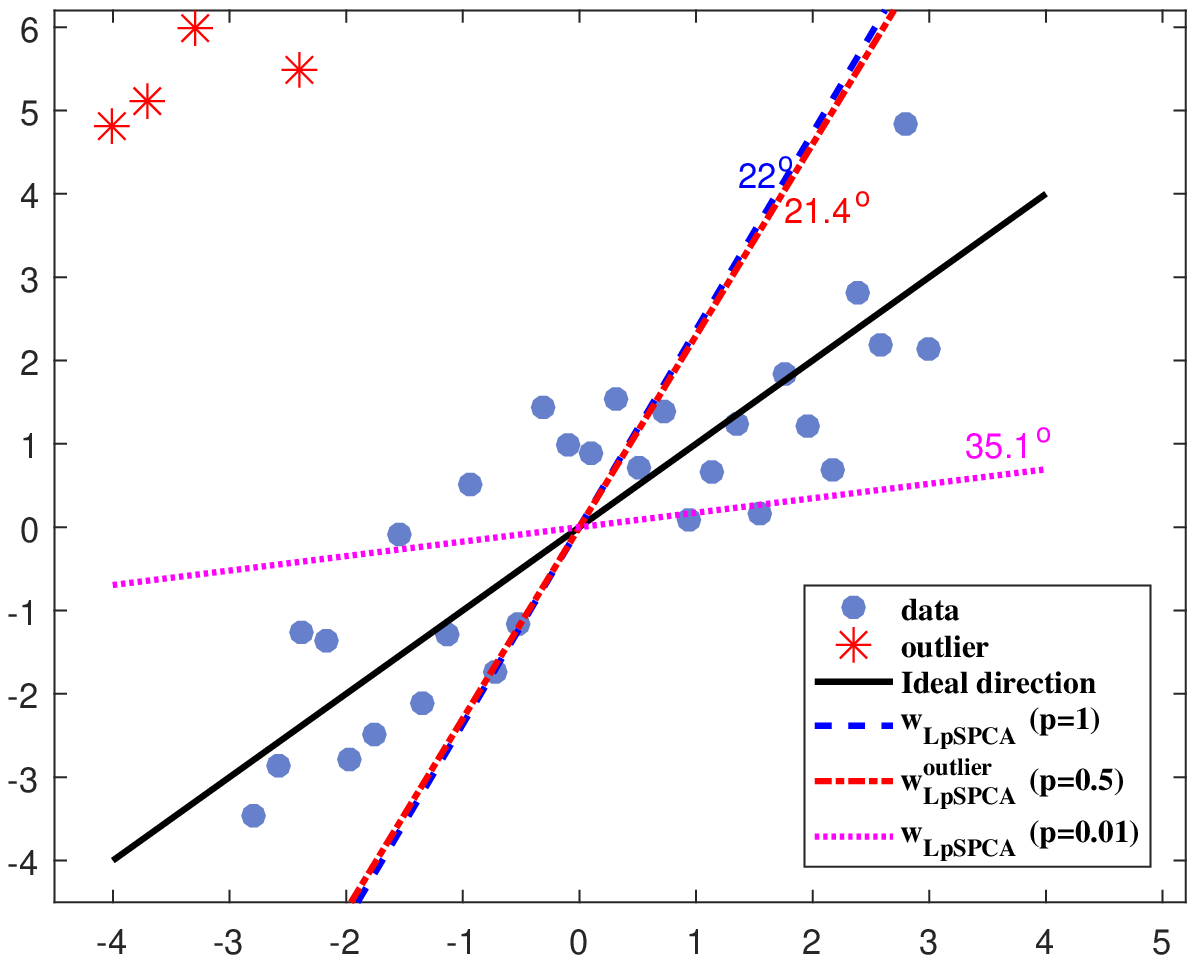}}
\subfigure[T$\ell_1$PCA]{\includegraphics[width=0.17\textheight]{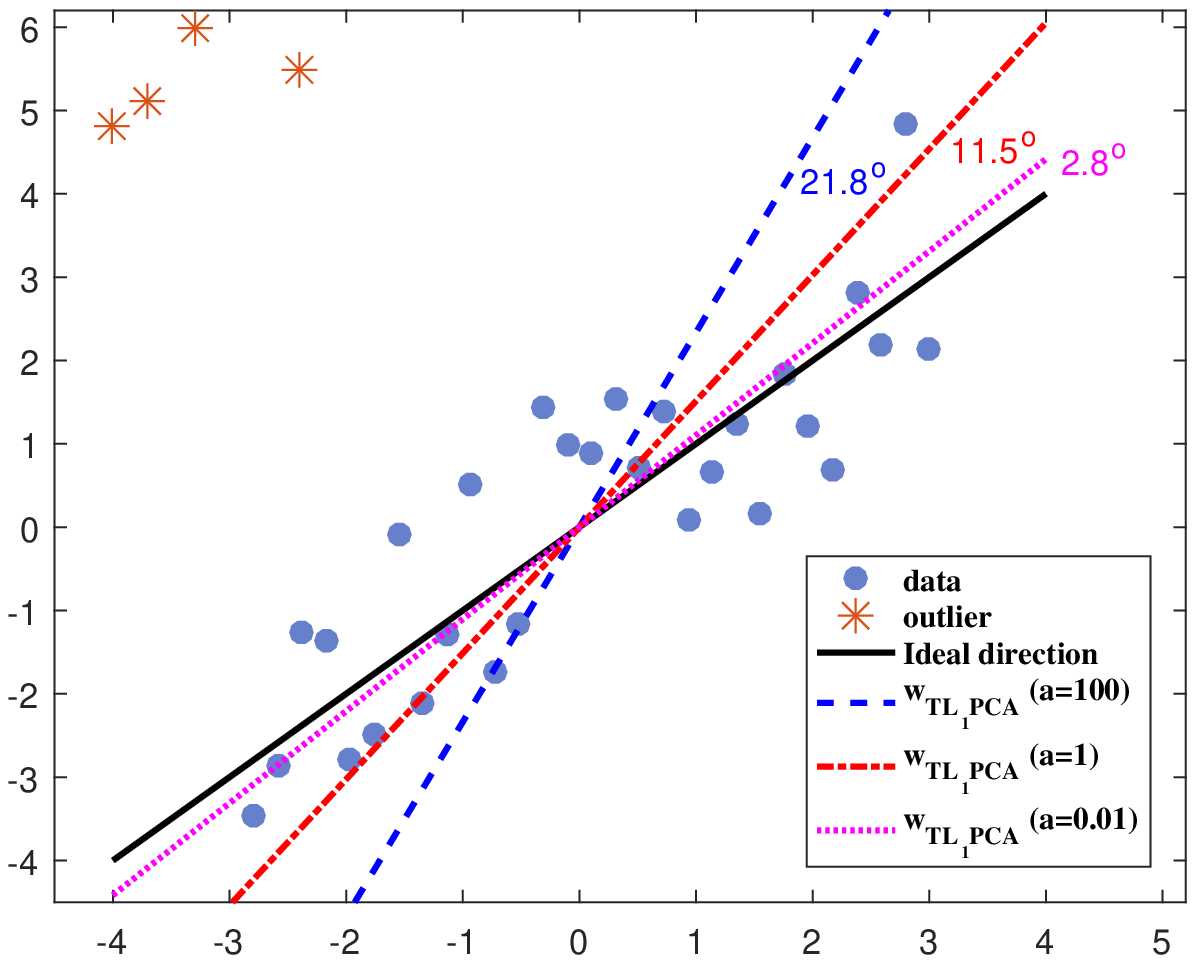}}
\caption{Experimental results for PCA, PCA-$\ell_1$, PCA-$\ell_p$, $\ell_p$SPCA, T$\ell_1$PCA on an artifical dataset.} \label{Toyfig}
\end{figure}

\subsection{Real-world Datasets}
The performance of T$\ell_1$PCA, PCA$\ell_p$ and $\ell_p$SPCA depends on parameter $a$ or $p$. For each of the three methods, we search the optimal parameter from $a$ = [100, 50, 10, 1, 0.5, 0.1, 0.05, 0.01, 0.001]  or $p$ = [1, 0.9, 0.7, 0.5, 0.3, 0.1, 0.01, 0.001] on all real-world datasets.

\subsubsection{Yale}
The Yale face database contains 165 grayscale images of 15 individuals under different lighting conditions and facial expressions, these facial expressions include happy, normal, sad, sleepy, surprised and wink. Each individual has 11 images. Each image in Yale database is cropped to $32\times32$ pixels. 9 images of each person are randomly selected for training and the $i \times i$ ($i=8$ and $12$) block noise is added to them. Original and noisy sample images of one individual are shown in Fig. Then we employ PCA, PCA-$\ell_1$, PCA-$\ell_p$, $\ell_p$SPCA and T$\ell_1$PCA to extract features respectively. For each given parameter $a$ or $p$, we compute the average classification accuracies of 15 random splits on original data and $i\times i$ noisy data.


\begin{figure*}[!t]
\centering
\subfigure[]{\includegraphics[width=0.24\textheight,height=4.52cm]{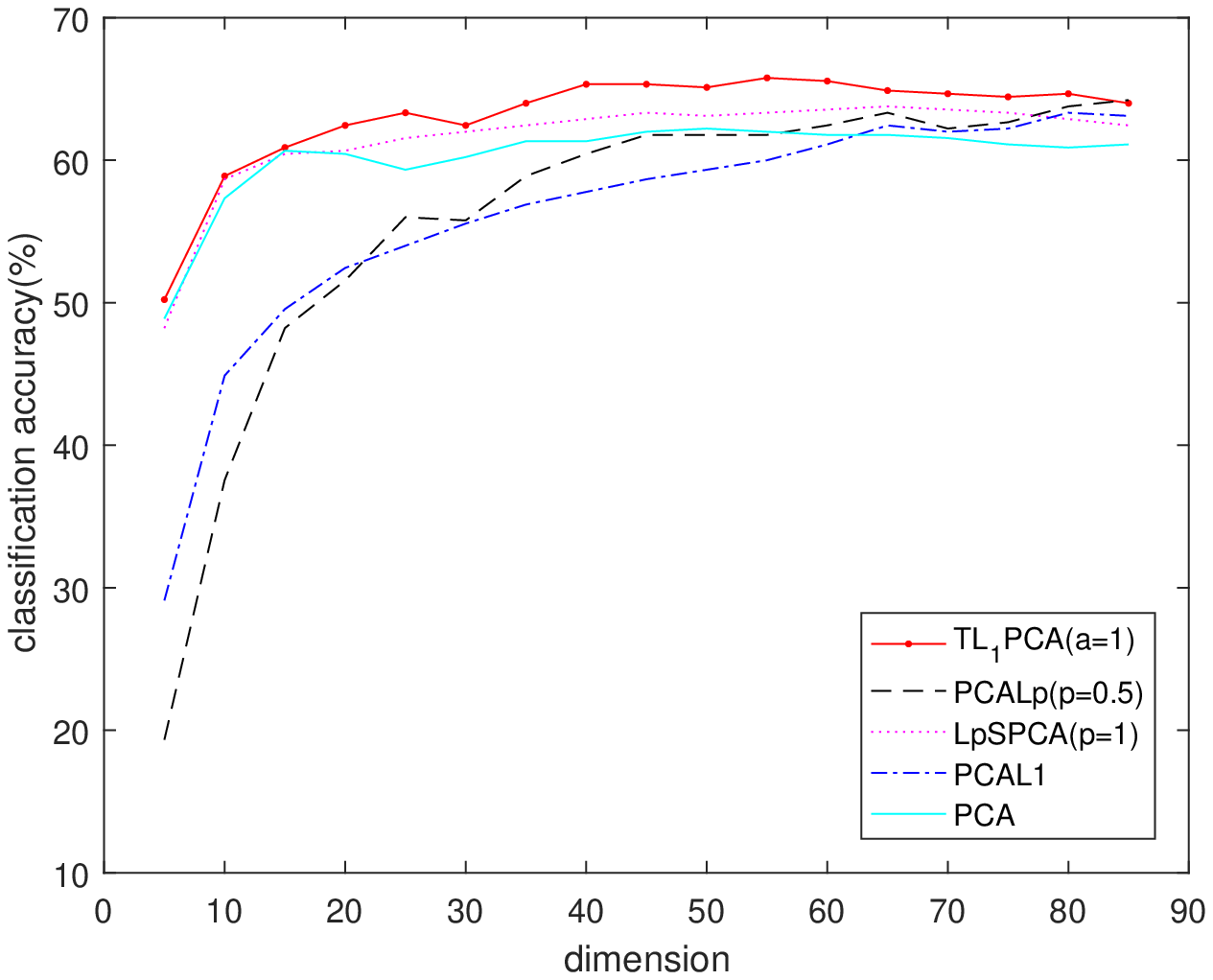}}
\subfigure[]{\includegraphics[width=0.24\textheight,height=4.52cm]{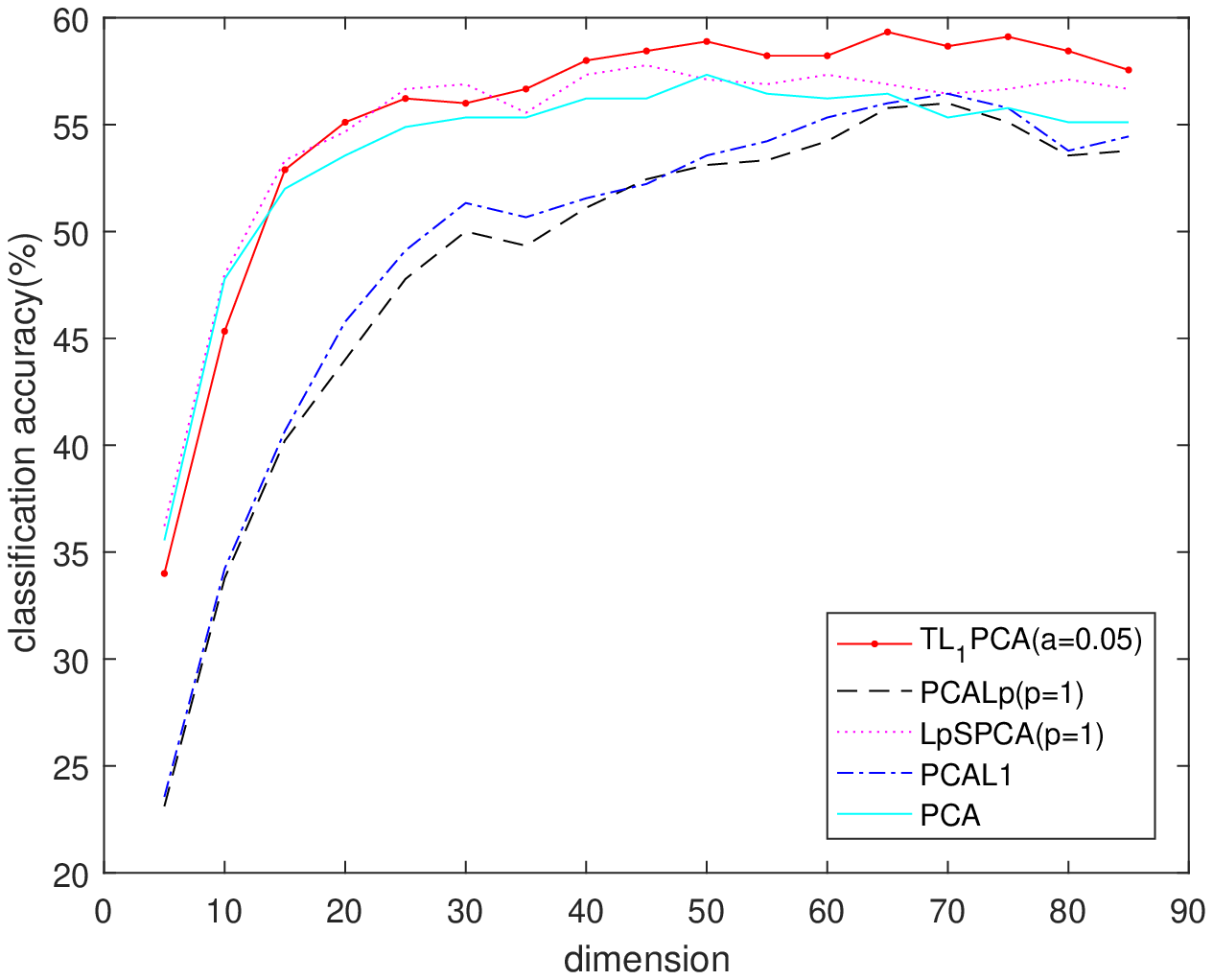}}
\subfigure[]{\includegraphics[width=0.24\textheight,height=4.52cm]{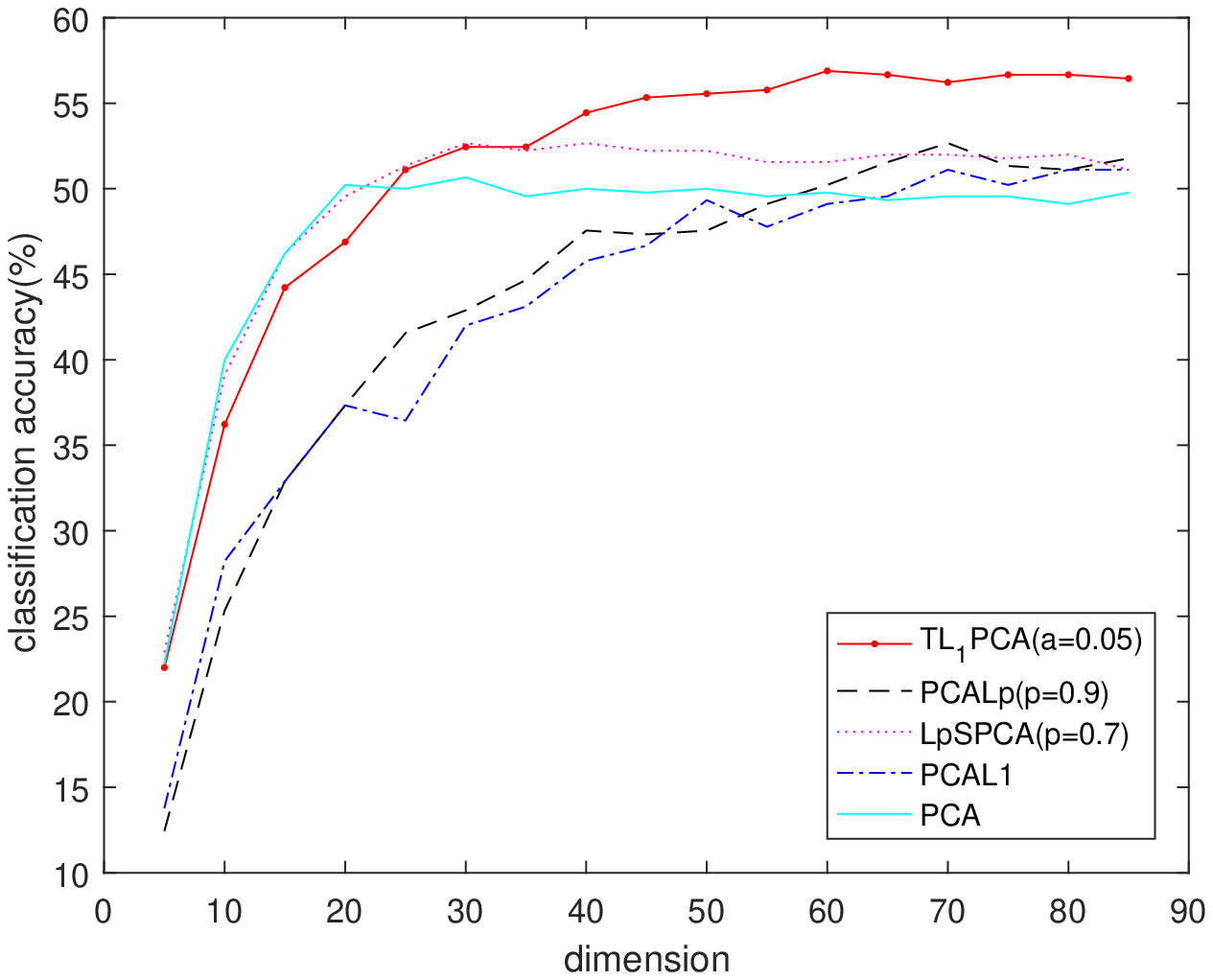}}
\caption{The accuracies of Yale database under the optimal parameter. (a) The accuracy of each method on original data. (b) The accuracy of each method on data with $8\times8$ block noise. (c) The accuracy of each method on data with $12\times12$ block noise.} \label{Yalefig}
\end{figure*}

\begin{table}[!htbp]
\begin{center}
\caption{The average classification accuracies of yale database under the optimal dimension.}
\setlength{\tabcolsep}{1mm}{
\begin{tabular}{cccccc}
\toprule
& \multicolumn{5}{c}{Accuracy(\%)}\\
\cline{2-6}
Method & \ T$\ell_1$PCA & \ PCA$\ell_p$ & \ $\ell_p$SPCA & \ PCA$\ell_1$ & \ PCA   \\
\hline
Original data                 &\textbf{65.77} &64.22 &63.77 &63.33 &62.22\\
With $8\times8$ block noise   &\textbf{59.33} &56.00 &57.78 &56.44 &57.33\\
With $12\times12$ block noise &\textbf{55.33} &52.66 &52.00 &51.11 &50.66\\
\bottomrule
\end{tabular}
}
\label{Yaleacc}
\end{center}
\end{table}

For each method, Fig. \ref{Yalefig} plots their average classification accuracy vs. the dimension of reduced space under the optimal parameter. Table \ref{Yaleacc} lists the classification accuracy of each method under the optimal dimension. The above results show that T$\ell_1$PCA outperforms the other methods in all conditions.
And the accuracy of T$\ell_1$PCA is around 2.5\% higher than the other PCAs. From Fig. \ref{Yalefig}, the accuracy of T$\ell_1$PCA has an upward tendency along the number of dimension, comparing with data without noise, the advantages in performance are strengthened on noisy data. The reason is that we use T$\ell_1$-norm, which has stronger suppression effect to noise. When the number of dimension reaches around 30, the accuracy tends to be stable.

\subsubsection{Jaffe}
The Jaffe database contains 213 images of 7 facial expressions posed by 10 Japanese female individuals. Each image is resized to $32\times32$ pixels. We randomly choose 70\% of each individual's images for training, adding the same noise as Yale database, and the remainders for testing. Some samples in Jaffe database are shown in Fig . Then PCA, PCA-$\ell_1$, PCA-$\ell_p$, $\ell_p$SPCA and T$\ell_1$PCA are applied to extract features. For each given parameter $a$ or $p$, the average classification accuracies on original data and $i\times i$ ($i=8$ and $12$) noisy data over 15 random splits are considered.

\begin{figure*}[!bt]
\centering
\subfigure[]{\includegraphics[width=0.24\textheight,height=4.52cm]{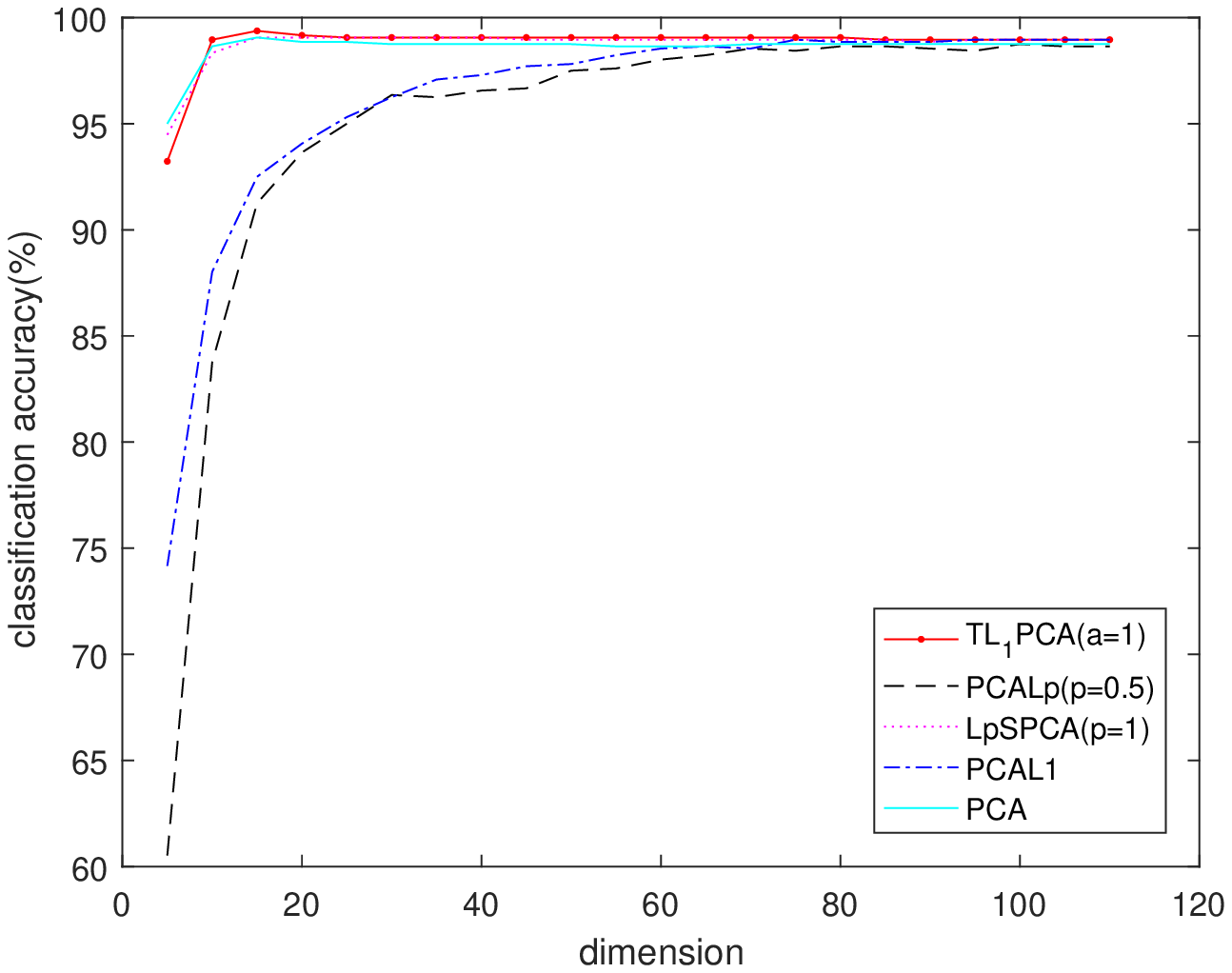}}
\subfigure[]{\includegraphics[width=0.24\textheight,height=4.52cm]{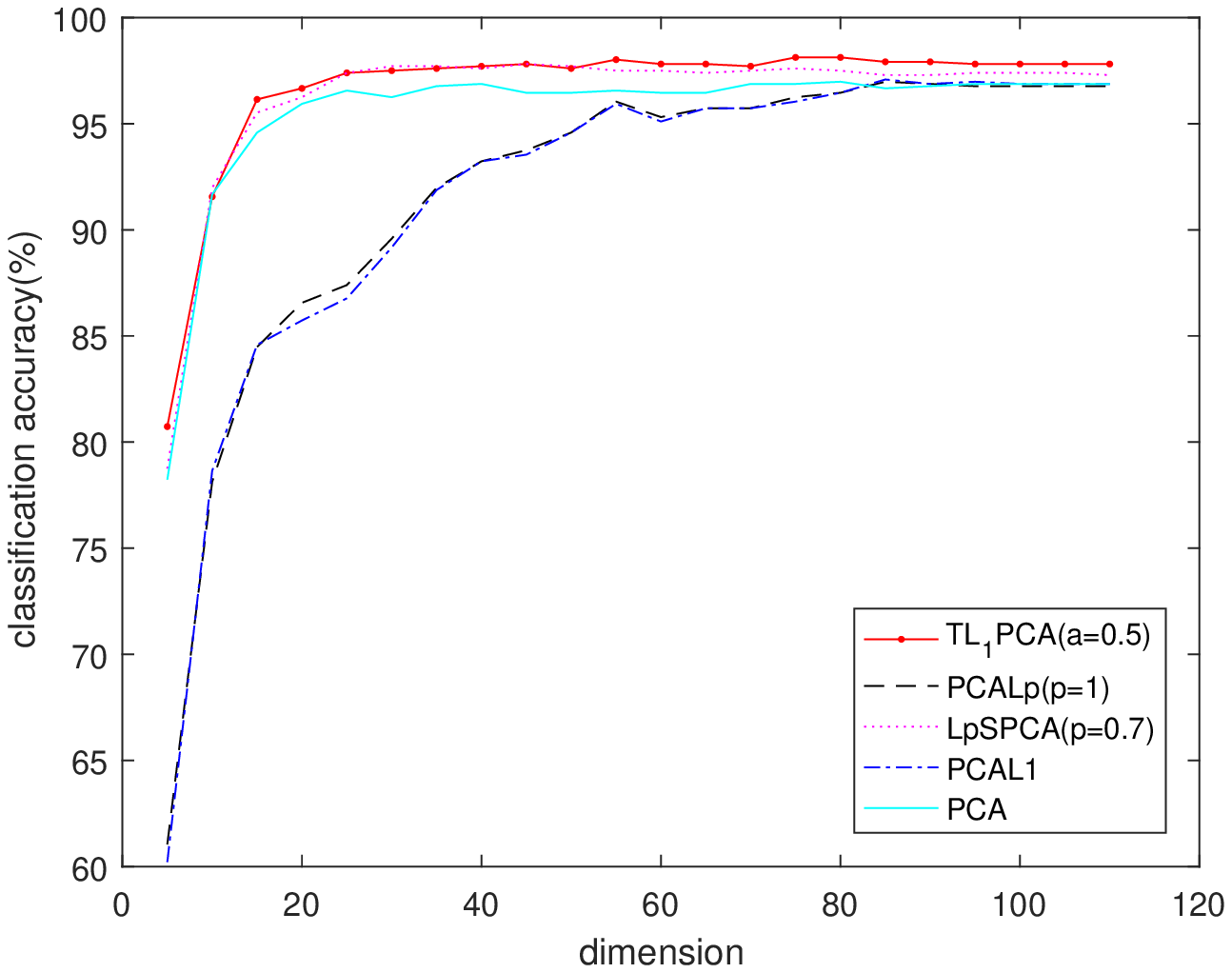}}
\subfigure[]{\includegraphics[width=0.24\textheight,height=4.52cm]{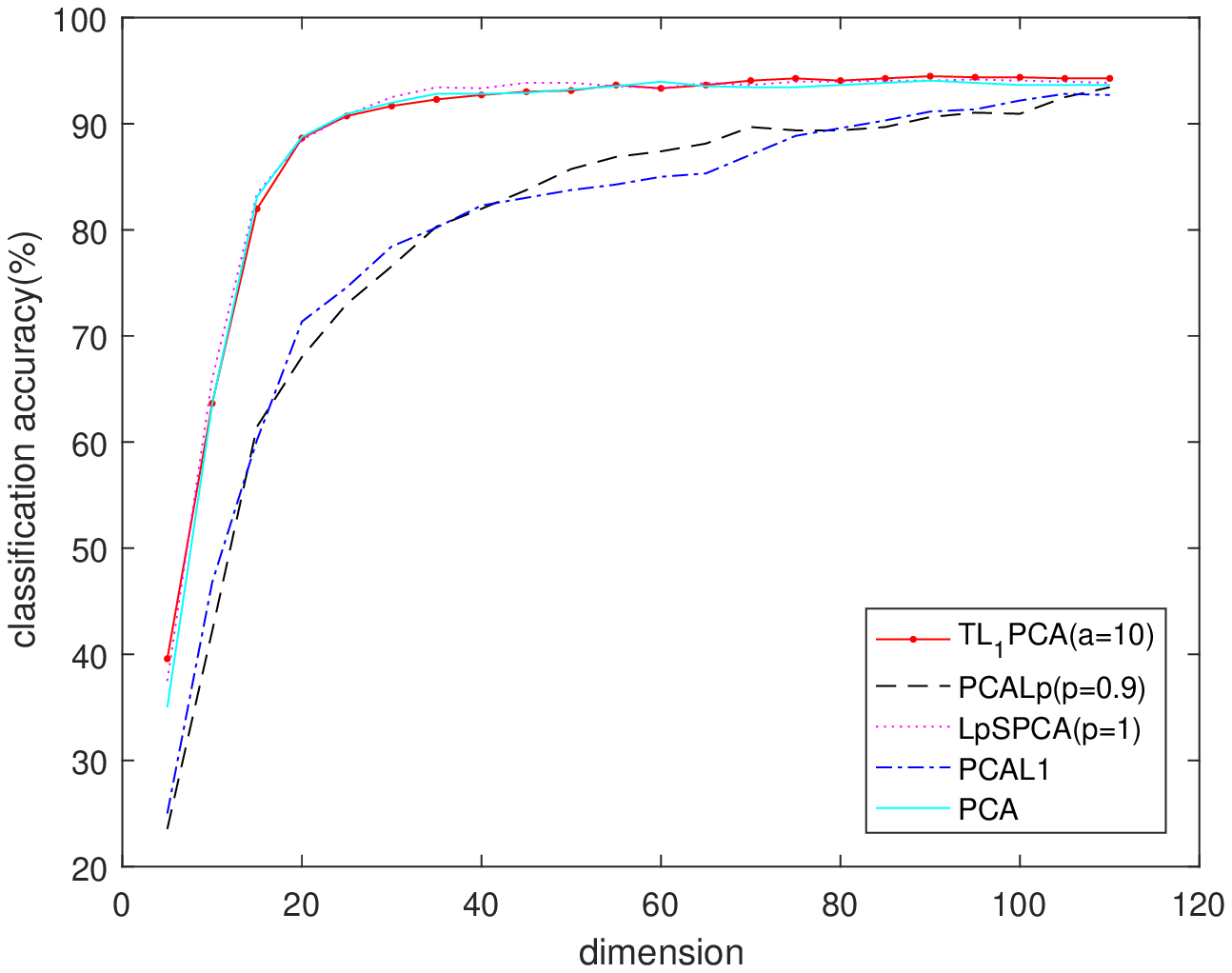}}\\
\caption{The accuracies of Jaffe database under the optimal parameter. (a) The accuracy of each method on original data. (b) The accuracy of each method on data with $8\times8$ block noise. (c) The accuracy of each method on data with $12\times12$ block noise.} \label{Jaffefig}
\end{figure*}

\begin{table}[!htbp]
\begin{center}
\caption{The average classification accuracies of jaffe database under the optimal dimension.}
\setlength{\tabcolsep}{1mm}{
\begin{tabular}{cccccc}
\toprule
& \multicolumn{5}{c}{Accuracy(\%)}\\
\cline{2-6}
Method & \ T$\ell_1$PCA & \ PCA$\ell_p$ & \ $\ell_p$SPCA & \ PCA$\ell_1$ & \ PCA   \\
\hline
Original data                 &\textbf{99.37} &98.75 &99.06 &98.95 &99.06\\
With $8\times8$ block noise   &\textbf{98.12} &96.97 &97.81 &97.08 &96.97\\
With $12\times12$ block noise &\textbf{94.47} &93.43 &94.16 &92.81 &94.06\\
\bottomrule
\end{tabular}
}
\label{Jaffeacc}
\end{center}
\end{table}

Fig plots the average classification accuracy vs. the dimension of reduced space for each method under the optimal parameter on Jaffe database. The  classification accuracy of each method under the optimal parameter is listed in Table \ref{Jaffeacc}. It can be seen that T$\ell_1$PCA is superior to the other methods. And the trend of accuracy and the behaviour of parameter $a$ on this database are also similar to those on Yale database. When the number of dimension reaches about 20, the accuracy tends to be stable.

\subsection{Convergence Experiments}
We finally investigate the performance of T$\ell_1$PCA in terms of convergency. Fig. \ref{ToyCurve_fig} plots the convergence curves on artifical data with/without outliers and the above two databases with/without noise. The results illustrate that T$\ell_1$PCA can converge quickly, generally within about 10 steps. It is consistent with the conclusion in proposition 1.
\begin{figure}[!htbp]
\centering
\subfigure[Artifical dataset]{\includegraphics[width=0.116\textheight]{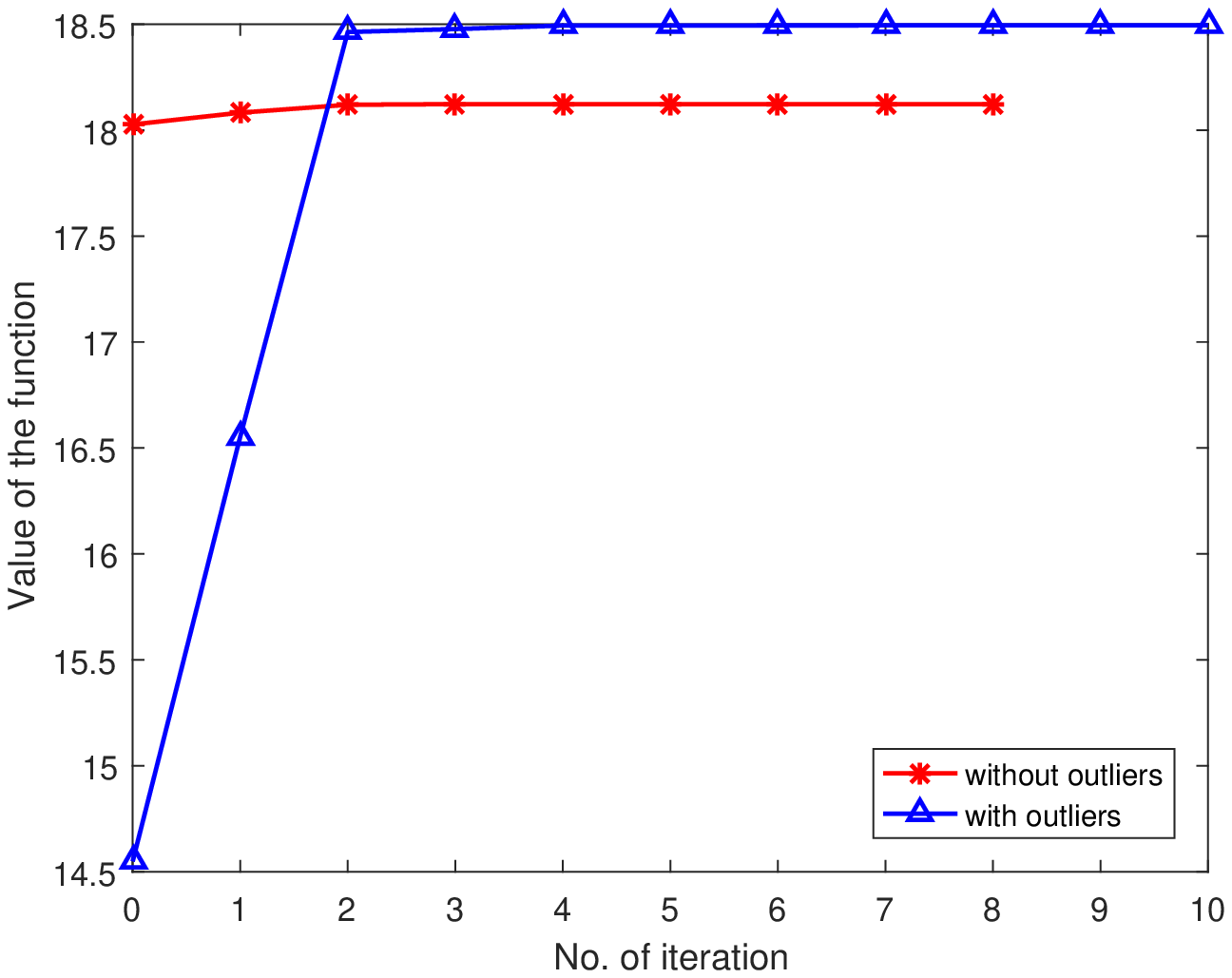}}
\subfigure[Yale]{\includegraphics[width=0.116\textheight]{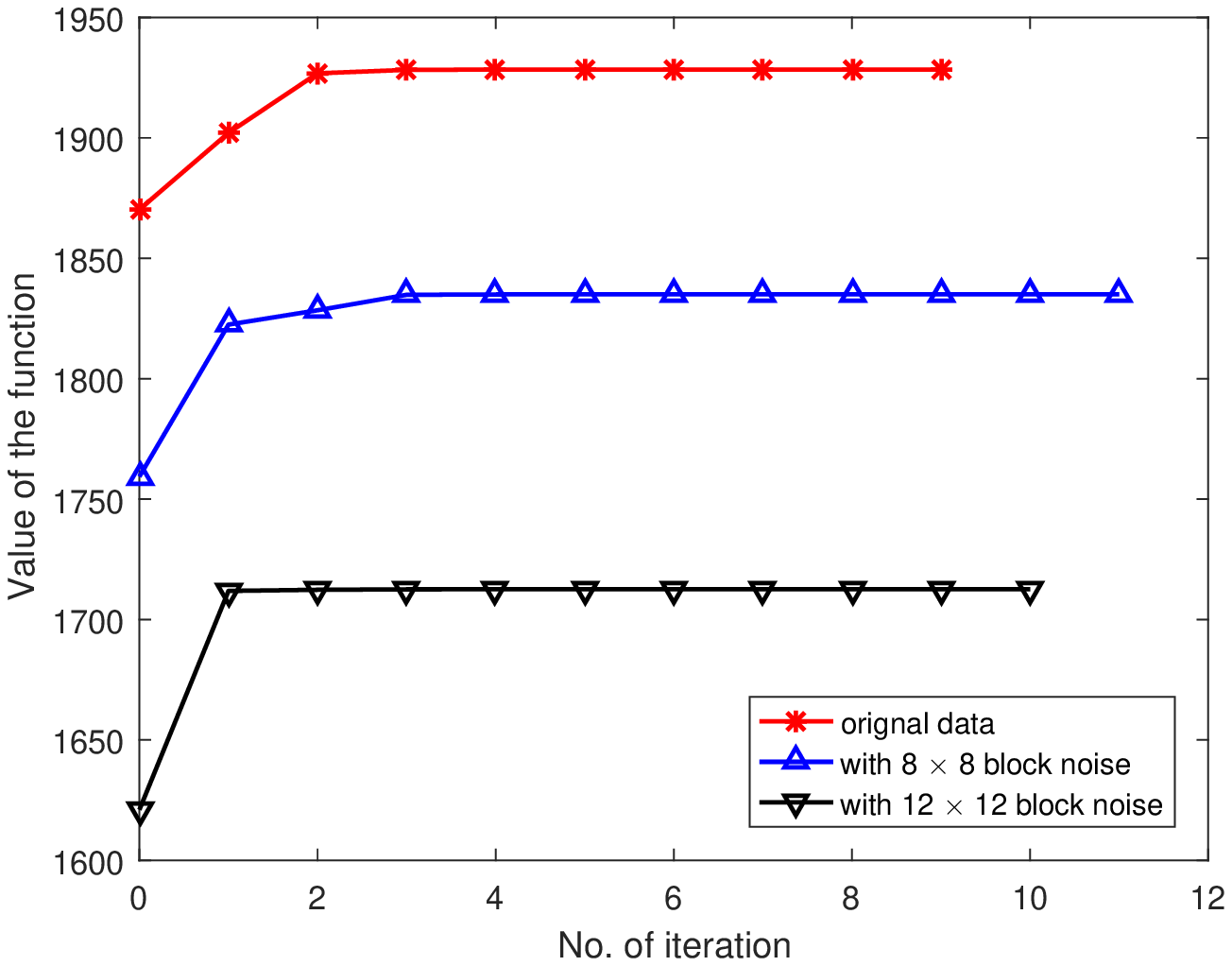}}
\subfigure[Jaffe]{\includegraphics[width=0.116\textheight]{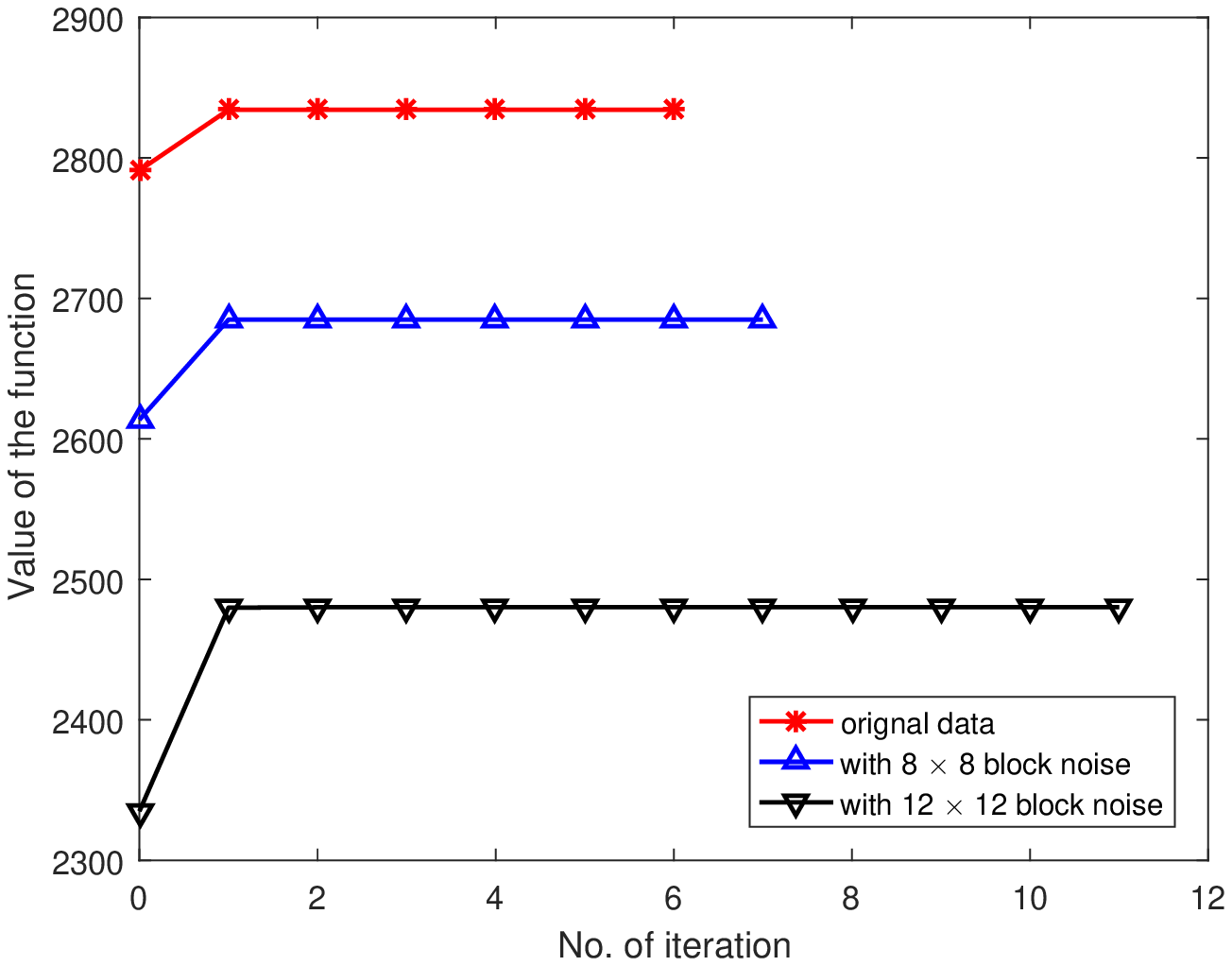}}
\caption{Variation of objective function value along the number of iteration for T$\ell_1$PCA on artifical dataset, Yale database and Jaffe database.} \label{ToyCurve_fig}
\end{figure}

\section{Conclusion}
In this paper, we have introduced a new T$\ell_1$-norm and shown its properties which indicate that T$\ell_1$-norm is more robust than $\ell_p$-norm ($0<p<1$). Then we proposed a novel dimensionality reduction method called T$\ell_1$PCA. It employed T$\ell_1$-norm as the distance metric to maximize the dispersion of the projected data. T$\ell_1$PCA was more robust to noise and outliers than $\ell_p$-norm-based PCA methods with higher classification accuracy. And convergence experiments showed that T$\ell_1$PCA can converge quickly. T$\ell_1$-norm not only could be applied to the unsupervised PCA but also supervised dimensionality methods, even other methods in machine learning. These will be our future work.


%




\ifCLASSOPTIONcaptionsoff
  \newpage
\fi



%




\end{document}